\newcommand{\cmark}{\ding{51}}%
\newcommand{\xmark}{\ding{55}}%
\theoremstyle{plain}
\newtheorem{theorem}{Theorem}[section]
\theoremstyle{definition}
\newtheorem{definition}[theorem]{Definition}
\theoremstyle{remark}
\newtheorem{remark}{Remark}[section]
\newcommand{\silentfootnote}[1]{%
  \begingroup
  \renewcommand\thefootnote{}\phantomsection\footnotetext{#1}%
  \addtocounter{footnote}{-1}%
  \endgroup
}
\title{Conformal Online Learning\\ of Deep Koopman Linear Embeddings}
\author{%
  Ben Gao \\
  Université Jean Monnet Saint-Etienne, CNRS,\\ Institut d’Optique Graduate School, Inria,\\ Laboratoire Hubert Curien UMR 5516,\\F-42023, SAINT- ETIENNE, France\\
  \texttt{ben.gao@univ-st-etienne.fr} \\
  \And
  Jordan Patracone$^*$\\
  Université Jean Monnet Saint-Etienne, CNRS,\\ Institut d’Optique Graduate School, Inria,\\ Laboratoire Hubert Curien UMR 5516,\\F-42023, SAINT- ETIENNE, France\\
  \texttt{jordan.patracone@univ-st-etienne.fr} \\
  \And
  Stéphane Chrétien\\
  Université Lyon 2\\ Laboratoire ERIC\\Bron, France\\
  \texttt{stephane.chretien@univ-lyon2.fr} \\
  \And
  Olivier Alata\\
  Université Jean Monnet Saint-Etienne, CNRS,\\ Institut d’Optique Graduate School,\\ Laboratoire Hubert Curien UMR 5516,\\F-42023, SAINT- ETIENNE, France\\
  \texttt{olivier.alata@univ-st-etienne.fr} \\
}
\begin{document}

\maketitle

\begin{abstract}
We introduce Conformal Online Learning of Koopman embeddings (COLoKe), a novel framework for adaptively updating Koopman-invariant representations of nonlinear dynamical systems from streaming data. Our modeling approach combines deep feature learning with multistep prediction consistency in the lifted space, where the dynamics evolve linearly. To prevent overfitting, COLoKe employs a conformal-style mechanism that shifts the focus from evaluating the conformity of new states to assessing the consistency of the current Koopman model. Updates are triggered only when the current model’s prediction error exceeds a dynamically calibrated threshold, allowing selective refinement of the Koopman operator and embedding. Empirical results on benchmark dynamical systems demonstrate the effectiveness of COLoKe in maintaining long-term predictive accuracy while significantly reducing unnecessary updates and avoiding overfitting.
\end{abstract}

\section{Introduction}
\silentfootnote{$^{*}$ Research conducted within the context of the Inria–IIT Associate Team.}
Understanding the evolution of complex systems over time is essential in numerous disciplines, including robotics~\citep{Bruder2021}, finance~\citep{mann2016dynamic}, physics~\citep{kaptanoglu2020characterizing}, chemistry~\citep{klus2020kernel} and neuroscience ~\citep{chretien2023leveraging}. A particularly powerful framework for this analysis is provided by operator-theoretic approaches, which recast nonlinear dynamics into linear evolution in function space. Among these, the \emph{Koopman operator}~\citep{budivsic2012applied,mauroy2020koopman} plays a central role: it describes the progression of measurement functions (or observables) defined over the state space, yielding an infinite-dimensional linear representation of nonlinear systems. Its spectral decomposition offers a principled way to characterize the long-term behavior and underlying structure of the dynamics~\citep{Bruton2022ModernKoopman}.\\
Although the Koopman operator is infinite-dimensional, several numerical methods have been developed to approximate it in a finite-dimensional setting. Notable among these are Dynamic Mode Decomposition (DMD)~\citep{Rowley2009DMD} and its nonlinear extensions, such as Extended DMD (EDMD)~\citep{Williams2015EDMD} and its variants (see~\citep{Jin2024} and references therein). Recent work has also proposed truncated Signature features rooted in path theory~\citep{chretien2025sigkoop}. Another line of research consists of kernel learning formulations in reproducing kernel Hilbert spaces~\citep{kostic2022rkhs,pmlr-v202-hou23c}. In addition, deep learning techniques have been leveraged to learn expressive Koopman-invariant representations, using neural networks and autoencoder architectures to estimate significant observables~\citep{Li2017Dictionary,wehmeyer2018time,Lusch2018Universal,yeung2019learning,Otto2019Autoencoder}. For example, \citet{xu2025reskoopnet} addresses this by explicitly minimizing the spectral residual. The stochastic Koopman operator extends the classical Koopman framework to random dynamical systems, capturing the evolution of observables under stochastic influences~\citep{vcrnjaric2020SKO, kostic2023sharp, xu2025sdmd}.\\ 
Most existing methods, however, operate in the \emph{offline setting}, assuming access to the entire dataset in advance. Yet, in many applications—such as online monitoring, adaptive control, or real-time forecasting—data arrive sequentially, and the system may evolve in a non-stationary fashion~\citep{Korda2018}. Several online methods have been proposed for this task. For instance, Online DMD~\citep{Zhang2018OnlineDMD} and Online EDMD~\citep{sinha2019rRecursiveEDMD, sinha2023online} incrementally update Koopman operator estimates as new data arrives. However, these methods typically rely on linear observables or fixed dictionaries, limiting their expressiveness. More recent methods using neural networks (e.g. \citep{liang2022online, Hao2024DKLT}) lack principled learning strategies and often rely on retraining the model using a fixed number of steps, regardless of whether the update is necessary. These limitations highlight the need for online learning strategies that are not only memory-efficient but also adaptive in order to update models only when required by the incoming data.\\
We address this need by introducing \emph{Conformal Online Learning of Koopman embeddings (COLoKe)}, whose high-level principle is sketched in Figure~\ref{fig:coloke_diagram}. Our approach combines deep Koopman representation learning with a novel repurposing of conformal prediction principles to decide when to adapt the model as data arrive in a streaming fashion. Updates are triggered only when necessary, reducing both computational burden and overfitting. This contributes to the growing body of work on online Koopman learning~\citep{sinha2023online, Loya2024Online, Mazouchi2023BatchOnline}. Our method differs by enabling adaptive embeddings, built-in reconstruction, and real-time updates while remaining memory-efficient (see Table~\ref{tab:competitors}). To the best of our knowledge, COLoKe is the first principled approach for online learning driven by conformal-based updates.

\noindent\textbf{Contributions.} In summary, our contributions are: (i) We propose a novel online learning framework that leverages conformal prediction to guide adaptive model updates; (ii) We instantiate this framework in the context of Koopman operator learning, enabling expressive online regression of nonlinear dynamical systems through deep data-adaptive embeddings; (iii) We provide a theoretical analysis establishing a dynamic regret bound under mild assumptions.

\noindent\textbf{Outline.} The rest of the paper is organized as follows. In Section~\ref{sec:background}, we present the mathematical background on Koopman operator theory and conformal prediction, which form the foundation of our approach. Section~\ref{sec:method} introduces our main method and details how a conformal-based update strategy permits to update model parameters adaptively in an online fashion. Section~\ref{sec:experiments} provides empirical validation on a range of benchmark dynamical systems, comparing COLoKe to existing online Koopman learning methods. Proofs and implementation details are deferred to the appendix.

\begin{table}[!t]
\caption{Positioning of COLoKe with respect to the state-the-art.}
\label{tab:competitors}
\centering
\begin{tabular}{lcccc}
 & \makecell{No need\\ for history} & \makecell{Online\\ update} & \makecell{Adaptative\\ embedding} & \makecell{Built-in\\ reconstruction}  \\ \hline
ODMD~\citep{Zhang2018OnlineDMD} & \cmark & \cmark & \xmark & \cmark \\ 
R-EDMD~\citep{sinha2019rRecursiveEDMD, sinha2023online} & \xmark & \cmark & \xmark & \xmark \\ 
DKLT~\citep{Hao2024DKLT} & \cmark & batch only & \cmark & \xmark \\ 
BatchOnline\citep{Mazouchi2023BatchOnline} & \cmark & batch only & \cmark & \xmark \\
R-SSID~\citep{Loya2024Online} & \xmark & batch only & $\boldsymbol{\approx}$ & \xmark \\
OnlineAE~\citep{liang2022online} & \cmark & \cmark & \cmark & \xmark \\
COLoKe (ours) & \cmark & \cmark & \cmark & \cmark \\ 
\end{tabular}
\end{table}

\section{Mathematical background}\label{sec:background}
In this section, we present the two key components of our method, namely the Koopman operator and the conformal prediction framework.

\subsection{Data-driven learning of the Koopman operator}

Let a measurable space $(\mathcal{X}, \Sigma_\mathcal{X})$, with $\mathcal{X}\subset \mathbb{R}^d$ and $\Sigma_\mathcal{X}$ a Borel $\sigma$-algebra on $\mathcal{X}$, and let $T \colon \mathcal{X} \to \mathcal{X}$ be a measurable, time-invariant, deterministic map. Hereafter, we consider a discrete-time autonomous dynamical system governed by the iteration rule $x_{t+1} = T(x_t)$ for all $t \in \mathbb{N}$, which describes the evolution of the system as a sequence of states $\{x_t\}_{t \in \mathbb{N}}$ entirely determined by the initial condition $x_0 \in \mathcal{X}$ and the update rule $T$. A classical approach to analyze such kind of nonlinear dynamical systems is through the \emph{Koopman operator} formalism. Rather than studying the trajectories in state space directly, the Koopman approach lifts the dynamics to an infinite-dimensional space of observables $\mathcal{F}$ (e.g., $L^2(\mathcal{X}, \mu)$ for some Borel measure $\mu$). The Koopman operator $\mathcal{K} \colon \mathcal{F} \to \mathcal{F}$ is defined by
\begin{equation}
(\mathcal{K} f)(x) = f(T(x)), \quad \forall f \in \mathcal{F}, \forall x \in \mathcal{X},
\end{equation}
which describes the evolution of observables along trajectories of the system. Importantly, $\mathcal{K}$ is a linear operator, even when $T$ is nonlinear, making it a powerful tool for the spectral analysis of nonlinear dynamics~\citep{Bruton2022ModernKoopman}. In particular, if $\varphi \in \mathcal{F}$ is an eigenfunction of $\mathcal{K}$ with eigenvalue $\lambda \in \mathbb{C}$, i.e., $\mathcal{K} \varphi = \lambda \varphi$, then along a trajectory $(x_t)$, the observable evolves linearly: $\varphi(x_t) = \lambda^t \varphi(x_0)$. It follows that, when a set of eigenfunctions $\{\varphi_1, \dots, \varphi_L\}$ defines an injective embedding of the state space, the system can be linearized via the coordinate transformation $x \mapsto \left( \varphi_1(x), \dots, \varphi_L(x) \right)$. In this lifted space, the nonlinear dynamics evolve linearly, providing a compelling framework for Koopman-based analysis and control~\citep{Korda2018,mauroy2020koopman}.

In many real-world scenarios, the transition map $T$ governing the dynamics is unknown or inaccessible, and we must instead rely on observed trajectories of the system $\{x_t\}_{t=0}^{N_t}$. This shift has led to the development of data-driven approximations of the Koopman operator $\mathcal{K}$. Motivated by the fact that Koopman eigenfunctions evolve linearly along trajectories, one typically seeks a set of observables $\{f_1, \dots, f_m\} \subset \mathcal{F}$ that spans a subspace that is approximately invariant under the action of $\mathcal{K}$. In the ideal setting where $\mathcal{S} = \text{span}(f_1, \dots, f_m)$ is invariant under $\mathcal{K}$, i.e., $\mathcal{K}f \in \mathcal{S}$ for all $f \in \mathcal{S}$, the restriction of $\mathcal{K}$ to $\mathcal{S}$ admits an exact representation by a finite-dimensional matrix $K \in \mathbb{C}^{m \times m}$, and the evolution of observables in this subspace is governed by the linear relation
\begin{equation}
\Phi(x_{t+1}) = K \Phi(x_t),
\end{equation}
where $\Phi(x) = [f_1(x), \dots, f_m(x)]^\top$ denotes the lifted representation of the state. This insight motivates the search for low-dimensional Koopman-invariant subspaces that admit such linear representations. Methods like EDMD~\citep{Williams2015EDMD} approximate this setting by fixing a dictionary $\{f_i\}_{i=1}^m$, but their performance is limited by the expressiveness and suitability of the chosen observables. To overcome this limitation, recent approaches~\citep{takeishi2017learning, Lusch2018Universal, yeung2019learning, Otto2019Autoencoder} propose to learn both the feature map $\Phi$ and the linear operator $K$ jointly using neural networks, leading to \emph{deep Koopman embeddings}.

\subsection{Conformal prediction for online data}\label{sec:conformal}

Conformal prediction~\citep{vovk1999machine,vovk2005algorithmic,romano2019conformalized,angelopoulos2021gentle} is a distribution-free framework for uncertainty quantification that constructs valid prediction sets with finite-sample guarantees.

Given past input–output pairs $\{(x_i, y_i)\}_{i=1}^{t-1}$, a model produces a prediction $\hat{y}_t$ for a new input $x_t$, and assigns a conformity score $s(x_t, y)$ to each candidate output $y$. The conformal prediction set is defined as $
C_t = \left\{ y \in \mathcal{Y} \;\middle|\; s(x_t, y) \le q_t \right\}$, where $q_t$ is a quantile calibrated to ensure $\mathbb{P}(y_t \in C_t) \ge 1 - \alpha$. The set $C_t$ is thus interpreted as a set of plausible outputs: it contains all candidate values of $y$ that are deemed sufficiently "conformal" (i.e., not too surprising) with respect to the current model and past observations. While this framework is powerful and requires no distributional assumptions beyond exchangeability, it is not directly applicable to time series or online learning, where data are typically non-exchangeable. In such settings, standard conformal methods may yield miscalibrated or overly conservative intervals. Addressing this challenge has motivated the development of adaptive conformal approaches that can track distribution shifts over time~\citep{Gibbs2021, xu2021conformal,Angelopoulos2023ConformalPID}. In particular, Conformal PID Control was recently introduced in \citet{Angelopoulos2023ConformalPID} as a dynamically calibrated version of conformal prediction. Here, “PID” refers to the use of \emph{Proportional}, \emph{Integral}, and (optionally) \emph{Derivative} feedback terms—standard components in control theory—used to adaptively adjust the prediction threshold. Rather than fixing the quantile threshold $q_t$ in advance, the method updates it online in response to conformity violations. Among its variants, we consider the conformal PI control scheme: after observing whether $y_t \in C_t$, a binary error signal $e_t = \mathbf{1}\{y_t \notin C_t\}$ is computed, and the threshold is adjusted via:

\begin{equation}\label{eq:conformalPI} 
\qquad q_{t+1} = q_t + \underbrace{\gamma (e_t - \alpha)\vphantom{\sum_{i=1}^t}}_{\text{Proportional term (P)}} + \underbrace{r_t\left( \sum_{i=1}^t (e_i - \alpha) \right)}_{\text{Integral term (I)}},\quad \forall t\in\mathbb{N},
\end{equation}

where $\gamma > 0$ is a learning rate and $r_t$ is a nonlinear saturation function~\citep{Angelopoulos2023ConformalPID} acting as an integral correction term. This PI-style control loop allows the prediction region to adaptively expand or contract to maintain the desired coverage. Under mild regularity conditions, it ensures that the empirical coverage converges, i.e., $\frac{1}{T} \sum_{t=1}^T e_t \to \alpha$ as $T \to \infty$.

In this paper, we revisit the conformal mechanism from a novel perspective. Rather than using conformal PI control to build uncertainty sets for $y_t$, we reinterpret the conformity score as a diagnostic tool for the model itself, in particular for deciding whether a Koopman embedding remains consistent over time or if it needs to be updated. This reinterpretation will be developed in Section~\ref{sec:conformal_online}.

\begin{figure}[!t]
\centering\begin{tikzpicture}[
    font=\small,
    node distance=.5cm and 1cm,
    box/.style={draw, minimum width=4cm, minimum height=2.5cm, rounded corners, fill=gray!10},
    block/.style={draw, minimum width=2.8cm, minimum height=2.0cm, fill=gray!10},
    arrow/.style={-Latex},
]

\node (input) {
    \begin{tikzpicture}[
      baseline=(x2.base),
      every node/.style={inner sep=0pt, outer sep=0pt, text height=1.5ex, text depth=.25ex}
    ]
      \node (lb) at (0,0) {$[$};
      \node[right=0pt of lb] (x1) {$x_{t-w-1},$};
      \node[right=0pt of x1] (x2) {$x_{t-w},$};
      \node[right=0pt of x2] (dots) {$\cdots$};
      \node[right=0pt of dots] (x3) {$x_{t}$};
      \node[right=0pt of x3] (rb) {$]$};

      \begin{pgfonlayer}{background}
        \node[fill=red!10, rounded corners=1pt, inner sep=1pt, fit=(x2)(dots)(x3)] (box) {};
      \end{pgfonlayer}

      \node[below=2pt of box] {\textcolor{red!70!black}{\scriptsize window}};
    \end{tikzpicture}
  };

\node[box, right=.5cm of input] (koopman) {\includegraphics[width=2.8cm,clip=true,trim=0cm -1cm -0.7cm 0cm]{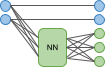}};
\node at (koopman.north) [yshift=0.3cm] {\textsc{Koopman Embedding}};
\node[left=1.35cm of koopman.center, yshift=.76cm] (xraw) {\(x\)};
\node[right=.9cm of koopman.center, yshift=.3cm] (phiout) {\(\Phi_{\theta_t}(x)\)};
\node at (koopman.south) [yshift=.3cm] {$\Phi_{\theta_t}(x_{t})\overset{?}{=}K_t^{\tau} \Phi_{\theta_t}(x_{t-\tau})$};

\node[box, right=3.0cm of koopman.east, minimum width=0cm] (conformal) {\includegraphics[width=1.75cm, clip=true, trim=0cm -1cm 0cm 0cm]{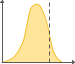}};
\node at (conformal.north) [yshift=0.3cm] {\textsc{Conformity-Based Update}};
\node at (conformal.center) [xshift=.3cm, yshift=-.5cm] {
    \tiny $q_t$
};

\node[align=center] at (conformal.center) [yshift=-.9cm] {
    \small $s_t \overset{?}{\leq} q_t$
};

\node[right=1.15cm of conformal, inner sep=0pt] (output) {};
\node[below=1pt of conformal.south, xshift=7pt] {\textsc{no}};

\draw[arrow] (input) -- (koopman.west);
\draw[arrow] (koopman.east) -- (conformal.west);
\node at ($(koopman.east)!0.5!(conformal.west)$) [above] {Score \(s_t\)(see Eq.~\ref{eq:score_st})};
\node at ($(koopman.east)!0.5!(conformal.west)$) 
[below] {Set \(q_{t+1}\)(see Eq.~\ref{eq:conformalPI})};

\draw (conformal.east) -- (output.center);
\node at ($(conformal.east)!0.25!(output.center)$) [below] {\textsc{yes}};

\draw[arrow] 
  (conformal.south) 
  -- coordinate[midway] (bendpoint) ++(0,-0.55)
  -| (koopman.south);
\node at ($(conformal.south)!0.5!(koopman.south)$) [below=3pt] {Update parameters $(\theta_t,K_t)$};

\draw[arrow] 
  (output.center) 
  -- coordinate[midway] ++(0,-2.5)
  -| (input.south);
\node at ($(input.center)!0.5!(output.center)$) [yshift=-2.35cm] {Wait for new datum at time $t\leftarrow t+1$};

\end{tikzpicture}
\caption{Schematic representation of COLoKe. The model receives a rolling window of observations, lifts them via a partially-learned feature map, computes a conformity score based on multi-step prediction error, and updates its parameters only if the score exceeds a conformal threshold.}
\label{fig:coloke_diagram} 
\end{figure}

\section{Online conformal learning of deep Koopman embedding}\label{sec:method}

\subsection{From online learning\dots}
We consider an online learning setting with bounded memory, where the goal is to incrementally learn a Koopman-invariant subspace from sequentially observed dynamics. Let $t \in \mathbb{N}$ denote the current time, and let $w \in \mathbb{N}^*$ be a fixed window size. At each time step $t$, we assume access to a finite buffer of recent observations $\mathcal{D}_t = \{x_{t-w}, \dots, x_t\}$, which serves to incrementally update the current Koopman approximation.

At each time step $t$, we denote by $\Phi_{\theta_t} : \mathcal{X} \to \mathbb{C}^m$ the current feature map and by $K_t \in \mathbb{C}^{m \times m}$ the corresponding finite-dimensional Koopman operator. Following \citep{Li2017Dictionary}, we also advocate to enforce interpretability and preserve part of the original state, by designing the feature map to include both the identity and a learnable nonlinear component, i.e.,
\begin{equation}\label{eq:lifting}
\Phi_{\theta_t}(x) = \left[ x, \tilde{\Phi}_{\theta_t}(x) \right]^\top , \quad \forall x\in\mathcal{X}
\end{equation}
where $\tilde{\Phi}_{\theta_t} : \mathcal{X} \to \mathbb{C}^{m - d}$ is a neural network with parameters $\theta_t$. This structure ensures that the lifted representation retains access to the state $x$ while learning an additional embedding $\tilde{\Phi}_{\theta_t}(x)$ from data. In order to capture temporal consistency within the buffer, we seek to minimize a multi-step prediction error over recent observations on $\mathcal{D}_t$. This leads to the following loss function used for online updates

\begin{definition}[Online Koopman training loss]
At each time step, the parameters $(\theta_t, K_t)$ are updated by minimizing the multi-step prediction loss, i.e.,

\begin{equation}\label{eq:online_loss}
\underset{(\theta_t,K_t)}{\mathrm{minimize}}\; \left[\mathcal{L}_t(\theta_t, K_t) \coloneqq \sum_{(s, \tau) \in \mathcal{I}_t} \ell_{s,\tau}(\theta_t, K_t)\right],
\end{equation}
where the index set $\mathcal{I}_t = \{(s, \tau) \in \mathbb{N}^2 \mid t - w \le s < s + \tau \le t \}$ collects all valid multi-step prediction pairs within the buffer $\mathcal{D}_t$, and each loss term is defined as
\begin{equation}
\ell_{s,\tau}(\theta_t, K_t) \coloneqq \sum_{j = 1}^{\tau} \left\| \Phi_{\theta_t}(x_{s + \tau}) - K_t^j \Phi_{\theta_t}(x_{s + \tau - j}) \right\|^2
\end{equation}
which accumulates the discrepancies between the lifted state at time $s + \tau$ and all intermediate predictions obtained by successively applying $K_t$ to earlier lifted states at times $\{s, \ldots, s+\tau-1\}$.
\end{definition}

This formulation is motivated by two design principles. First, as shown by~\citet{Otto2019Autoencoder}, multi-step prediction promotes the identification of persistent spectral modes and approximate Koopman eigenfunctions, thereby improving long-term prediction. Second, by explicitly including the state $x$ in the lifted representation~\eqref{eq:lifting}, the model embeds a reconstruction constraint directly into the consistency loss. This coupling eliminates the need for a decoder, as prediction errors in the lifted space naturally reflect discrepancies in the original coordinates.

\begin{remark}[Prediction conformity score\label{rk:pred_score}] In particular, we have that
\begin{equation}\label{eq:ell_score}
\ell_{t-w,w}(\theta_t,K_t) = \sum_{\tau=1}^{w} \left\| \Phi_{\theta_t}(x_t) - K_t^\tau \Phi_{\theta_t}(x_{t-\tau}) \right\|^2
\end{equation}
which quantifies the discrepancy between the \emph{current} lifted state $\Phi_{\theta_t}(x_t)$ and its multi-step predictions from all previous states in the buffer. This term corresponds to predicting $x_t$ from each of the past $w$ observations using the learned Koopman operator $K_t$, and thus provides a direct measure of temporal consistency toward the \emph{present}.
\end{remark}

In principle, one could perform multiple optimization steps to minimize $\mathcal{L}_t(\theta_t, K_t)$ at each time $t$, thereby reducing the residual prediction error as much as possible within the local buffer. However, such an approach may lead to overfitting to recent data and degrade generalization. This behavior is typical of baseline online learning schemes that rely on fixed optimization schedules without accounting for model confidence or adaptation needs. To mitigate this issue, we propose a data-driven stopping rule inspired by conformal PID control, introduced in the next section, which adaptively decides whether additional updates are beneficial.

\subsection{\dots To conformal online learning}\label{sec:conformal_online}

Assuming that the Koopman embedding, parametrized by $(\theta_{t-1}, K_{t-1})$, has been adequately trained on the previous buffer $\mathcal{D}_{t-1} = \{x_{t-1-w}, \dots, x_{t-1}\}$, we now aim to determine whether it remains consistent with the newly observed state $x_t$. Rather than retraining unconditionally at each step, we initialize $(\theta_t, K_t) \coloneqq (\theta_{t-1}, K_{t-1})$ and update the model only when the incoming data indicates a significant deviation from previously learned dynamics.

To assess this deviation, we rely on the \emph{prediction conformity score} introduced in Remark~\ref{rk:pred_score}, which measures the discrepancy between the current lifted state $\Phi_{\theta_t}(x_t)$ and its multi-step predictions from the past window. This score acts as a proxy for temporal alignment and forms the basis of our adaptive update rule. To formalize this idea, we define a score function $s_t$ by treating the prediction conformity score as a function of the test point $x \in \mathcal{X}$, with model parameters $(\theta_t, K_t)$ fixed:
\begin{equation}\label{eq:score_st}
s_t(x, (\theta_t, K_t)) \coloneqq \sum_{\tau=1}^{w} \left\| \Phi_{\theta_t}(x) - K_t^\tau \Phi_{\theta_t}(x_{t-\tau}) \right\|^2.
\end{equation}
Note the difference with \eqref{eq:ell_score}, in the sense that $s(x,(\theta_t,K_t))$ is a function of the additional variable $x\in \mathcal X$. The score $s(x,(\theta_t,K_t))$ is instrumental in defining the prediction set as introduced now. 
Inspired by the conformal prediction interval (PI) framework described in Section~\ref{sec:conformal}, we define a prediction set as
\begin{equation}
C_t = \left\{ x \in \mathcal{X} \;\middle|\; s_t(x, (\theta_t, K_t)) \leq q_t \right\},
\end{equation}
where $q_t > 0$ is a calibration threshold that controls the conformity level. 
In the standard conformal prediction setting, the set $C_t$ serves as a prediction region for the next state $x_t$. This interpretation treats $C_t$ as a ($1-\alpha$)-confidence region in which the next observation is expected to fall, based on past conformal scores.

In our setting, however, we use this prediction set in a novel way: not for uncertainty quantification, but as a decision rule for model adaptation. Specifically, if the newly observed state $x_t$ lies outside $C_t$, the prediction error is considered too large relative to past conformity, and an update of $(\theta_t, K_t)$ is triggered. Otherwise, the model is retained without further training. This repurposing of conformal principles provides a lightweight, data-driven mechanism for online learning.

Crucially, while classical conformal prediction evaluates the conformity of new \emph{states}, our approach shifts perspective to evaluate the conformity of \emph{parameter configurations}. That is, rather than asking whether a new observation aligns with a fixed model, we ask whether there exists any parameter pair $(\theta, K)$ under which the current \emph{observed} state $x_t$ is conformal.

In this spirit, and since we are not primarily interested in constructing a prediction interval for $x_t$, we introduce the novel notion of prediction score set.

\begin{definition}[Prediction score set]
Given a newly observed state $x_t$ and a conformity threshold $q_t > 0$, the \emph{prediction score set} at time $t$ is defined as
\begin{equation}
S_t = s_t(x_t, \mathrm{Param}_t),
\end{equation}
where $\mathrm{Param}_t = \{(\theta, K) \;  \text{such that} \; s = s_t(x_t, (\theta, K)) \equiv \ell_{t-w,w}(\theta, K) \leq q_t \}$. This set contains all prediction scores attainable at $x_t$ by Koopman models that satisfy the current calibration constraint.
\end{definition}

In this view, if the current model satisfies $ \ell_{t-w,w}(\theta_t, K_t) \in S_t$ (or equivalently, $ \ell_{t-w,w}(\theta_t, K_t)\leq q_t$), it is deemed temporally consistent and it is retained. Conversely, if $ \ell_{t-w,w}(\theta_t, K_t) \notin S_t$ (i.e., $ \ell_{t-w,w}(\theta_t, K_t)> q_t$), the current model is no longer consistent with past dynamics, and an update of $(\theta_t, K_t)$ is triggered. Note that, by design, there is no need to construct $\mathrm{Param}_t$ explicitly. We refer to the resulting adaptive online learning scheme, driven by conformity-based update decisions, as \emph{Conformal Online Learning of Koopman embeddings} (COLoKe). The full procedure implementing the principles of conformal-based online learning is summarized in Algorithm~\ref{alg:colok} and sketched in Fig.~\ref{fig:coloke_diagram}.
\begin{algorithm}[!t]
\caption{Conformal Online Learning of Koopman embeddings (COLoKe)}
\label{alg:colok}
\begin{algorithmic}[1]
\Require Buffer size $w$, initial parameters $(\theta_{w-1}, K_{w-1})$, step-size $\eta>0$
\State Initialize conformity threshold $q_w$

\For{$t = w, w+1, \dots$}
    \State Observe new state $x_t$ and update buffer $\mathcal{D}_t = \{x_{t-w}, \dots, x_t\}$
    \State Set $(\theta_t, K_t) \gets (\theta_{t-1}, K_{t-1})$
    \State Compute prediction conformity score $s_t \gets \ell_{t-w,w}(\theta_t, K_t)$ \Comment{See Eq.~\eqref{eq:ell_score}}
    \State Update threshold: $q_{t+1} \gets \mathrm{ConformalPI}(q_t)$ \Comment{See Eq.~\eqref{eq:conformalPI} with $e_t= \mathbf{1}\{s_t > q_t\}$}
    \While{$s_t > q_t$}
        \State Perform a gradient-based step: $(\theta_t, K_t) \gets (\theta_t, K_t) - \eta \nabla_{\theta,K} \mathcal{L}_t(\theta_t, K_t)$ \Comment{See Eq.~\eqref{eq:online_loss}}
        \State Recompute $s_t \gets \ell_{t-w,w}(\theta_t, K_t)$
    \EndWhile
    
\EndFor
\end{algorithmic}
\end{algorithm}
We provide below a preliminary bound for the dynamic regret of COLoKe. 

\begin{theorem}[Dynamic regret of COLoKe\label{thm:colok_regret}]
Let $(\theta_t, K_t)$ be the parameters produced by Algorithm~\ref{alg:colok} and let $(\theta_t^*, K_t^*) \in \mathrm{argmin}_{(\theta, K)} \mathcal{L}_t(\theta, K)$ denote any time-dependent optimal model minimizing the loss at step $t$. Further assume:
\begin{itemize}[leftmargin=1cm,noitemsep]
    \item[\textbf{(A1)}] Each $\mathcal{L}_t$ is $L$-smooth with $\|\nabla \mathcal{L}_t(\theta, K)\| \le B$;
    \item[\textbf{(A2)}] The oracle path has bounded total variation and squared variation:
    $$
    V_T \coloneqq \sum_{t=1}^{T} \|(\theta_{t+1}^*, K_{t+1}^*) - (\theta_t^*, K_t^*)\| < \infty, \quad S_T := \sum_{t=1}^{T} \|(\theta_{t+1}^*, K_{t+1}^*) - (\theta_t^*, K_t^*)\|^2 < \infty;
    $$
    \item[\textbf{(A3)}] The conformity thresholds satisfy $\sum_{t=1}^T q_t \le \mathcal{O}\left(\alpha h(T)\right)$ for some sublinear, nonnegative, nondecreasing function $h$;
\end{itemize}

Then the dynamic regret satisfies: $\sum_{t=1}^T \left[\mathcal{L}_t(\theta_t, K_t) - \mathcal{L}_t(\theta_t^*, K_t^*)\right]
\le \mathcal{O}\left( \alpha h(T) + V_T + S_T\right)$.
\end{theorem}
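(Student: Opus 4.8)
The plan is to control the instantaneous regret $r_t \coloneqq \mathcal{L}_t(\theta_t,K_t)-\mathcal{L}_t(\theta_t^*,K_t^*)$ at each step and sum, treating separately the \emph{passive} rounds, in which the conformity test $s_t\le q_t$ holds on entry and the parameters are left unchanged, and the \emph{active} rounds, in which the while loop of Algorithm~\ref{alg:colok} performs one or more descent steps on the full loss $\mathcal{L}_t$. Write $z_t=(\theta_t,K_t)$ and $z_t^*=(\theta_t^*,K_t^*)$. The key structural fact I would use throughout is that, whichever branch is taken, the round ends with $\ell_{t-w,w}(z_t)=s_t\le q_t$; since $\ell_{t-w,w}(z_t^*)\ge 0$, this already bounds one summand of $\mathcal{L}_t$ and reduces the task to bounding the residual $\mathcal{R}_t(z_t)-\mathcal{R}_t(z_t^*)$ with $\mathcal{R}_t\coloneqq \sum_{(s,\tau)\in\mathcal{I}_t,\,(s,\tau)\neq(t-w,w)}\ell_{s,\tau}$.

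\textbf{Per-round estimate.} For the residual I would combine two consequences of \textbf{(A1)}: $L$-smoothness together with first-order optimality $\nabla\mathcal{L}_t(z_t^*)=0$ of the (unconstrained) minimizer gives $\mathcal{L}_t(z_t)-\mathcal{L}_t(z_t^*)\le \tfrac{L}{2}\|z_t-z_t^*\|^2$, while the uniform gradient bound makes each $\mathcal{L}_t$ $B$-Lipschitz, hence also $\mathcal{L}_t(z_t)-\mathcal{L}_t(z_t^*)\le B\|z_t-z_t^*\|$; keeping both will let the summation produce $S_T$ and $V_T$ respectively. The point is then to bound $\|z_t-z_t^*\|$ round by round. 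On a passive round $z_t=z_{t-1}$, so $\|z_t-z_t^*\|\le \|z_{t-1}-z_{t-1}^*\|+\|z_{t-1}^*-z_t^*\|$, i.e. the previous displacement plus a comparator shift; moreover the pairs of $\mathcal{R}_t$ that \emph{do not} end at $x_t$ already belong to $\mathcal{I}_{t-1}$, while those ending at $x_t$ are sub-sums of $\ell_{t-w,w}(z_t)$ and hence each at most $q_t$, so the residual inherits the control from step $t-1$ up to an additive $\mathcal{O}(w\,q_t)$. On an active round, taking the step-size small enough that the descent lemma applies (e.g. $\eta\le 1/L$) gives $\mathcal{L}_t(z_t)\le\mathcal{L}_t(z_{t-1})$ throughout the loop, which together with the exit condition $\ell_{t-w,w}(z_t)\le q_t$ and the same buffer-overlap decomposition again carries the bound forward with only an additive $\mathcal{O}(w\,q_t)$ and a comparator shift.

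\textbf{Summation.} Adding the per-round estimates over $t=1,\dots,T$: the inherited-displacement terms telescope and are absorbed into the initial error; the comparator shifts $\|z_{t-1}^*-z_t^*\|$ and their squares sum to $V_T$ and $S_T$ by \textbf{(A2)}; and $\sum_{t=1}^T \mathcal{O}(w\,q_t)$ is $\mathcal{O}(\alpha h(T))$ by \textbf{(A3)}, with $w$ absorbed into the constant. Collecting the three contributions yields $\sum_{t=1}^T r_t\le\mathcal{O}(\alpha h(T)+V_T+S_T)$. One may optionally sharpen the count of active rounds via the conformal PI coverage property $\tfrac1T\sum_t \mathbf 1\{s_t>q_t\}\to\alpha$, but \textbf{(A3)} already suffices.

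\textbf{Main obstacle.} The delicate step is the active round. Because the losses $\mathcal{L}_t$ induced by a deep feature map are nonconvex, the while-loop iterations are only guaranteed to decrease $\mathcal{L}_t$ monotonically, not to reach $z_t^*$, and the conformal stopping rule pins down only the single term $\ell_{t-w,w}$ relative to a \emph{threshold} rather than to the optimum. Turning ``one summand is below $q_t$'' into ``the full gap is $\mathcal{O}(q_t+\text{drift})$'' is where the argument is genuinely loaded: it requires either a local error bound / gradient-dominance condition tying the sublevel sets of $\ell_{t-w,w}$ to those of $\mathcal{L}_t$, or a more careful recursion exploiting that consecutive buffers $\mathcal{D}_{t-1},\mathcal{D}_t$ overlap in all but one sample while the parameters have not drifted far. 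Making this fully rigorous without extra hypotheses is exactly why the statement is presented as a \emph{preliminary} bound.
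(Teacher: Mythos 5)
Your strategy diverges substantially from the paper's, which is far more direct: it asserts that \emph{every} summand of $\mathcal{L}_t$ is dominated by the monitored one, i.e. $\mathcal{L}_t(\theta_t,K_t)=\sum_{(s,\tau)\in\mathcal{I}_t}\ell_{s,\tau}(\theta_t,K_t)\le|\mathcal{I}_t|\,\ell_{t-w,w}(\theta_t,K_t)\le\tfrac{w(w+1)}{2}\,q_t$, so the exit condition of the while loop controls the \emph{whole} loss at the produced iterate, not just one term. The per-step regret is then at most $c\,q_t-\mathcal{L}_t(\theta_t^*,K_t^*)$ with $c=w(w+1)/2$, and the drift terms enter only through the algebraic split $-\mathcal{L}_t(z_t^*)=-\mathcal{L}_t(z_{t+1}^*)+[\mathcal{L}_t(z_{t+1}^*)-\mathcal{L}_t(z_t^*)]$: the first piece sums to at most $c\sum_t q_t=\mathcal{O}(\alpha h(T))$ by nonnegativity of the loss, and the second is bounded by $B\Delta_t+\tfrac{L}{2}\Delta_t^2$ via (A1), giving $BV_T+\tfrac{L}{2}S_T$. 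There is no passive/active case split, no descent lemma, no step-size restriction, and no tracking of $\|z_t-z_t^*\|$. Notably, the obstacle you single out --- that the stopping rule pins down only $\ell_{t-w,w}$ --- is exactly what the paper's domination step is meant to dispatch; your remark that the pairs with $s+\tau=t$ are partial sums of $\ell_{t-w,w}$ is the correct justification for those pairs, but for pairs with $s+\tau<t$ the inequality $\ell_{s,\tau}\le\ell_{t-w,w}$ is not a sub-sum relation and the paper does not argue it, so you have correctly located the genuinely loaded step of the theorem.

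As a proof, however, your proposal does not close, and not only for the reason you concede at the end. First, the per-round recursion does not telescope to the right order: inheriting the residual from step $t-1$ ``up to an additive $\mathcal{O}(w\,q_t)$'' yields $\mathcal{L}_t(z_t)\lesssim\mathcal{L}_0+w\sum_{\tau\le t}q_\tau$, and summing over $t$ produces $\sum_{t\le T}\sum_{\tau\le t}q_\tau$, which can be of order $T\,h(T)$ rather than $h(T)$; you would need the terms leaving the buffer each round to cancel those entering it, which you do not argue. Second, the active-round step invokes the descent lemma under $\eta\le 1/L$, a condition Algorithm~\ref{alg:colok} does not impose, and monotone decrease of $\mathcal{L}_t$ along the while loop says nothing about $\mathcal{L}_t(z_t)-\mathcal{L}_t(z_t^*)$ in the nonconvex setting without the local error bound you only mention as a possibility. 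Third, the estimates $\mathcal{L}_t(z_t)-\mathcal{L}_t(z_t^*)\le\tfrac{L}{2}\|z_t-z_t^*\|^2$ and $\le B\|z_t-z_t^*\|$ are fine in themselves, but nothing in the algorithm controls $\|z_t-z_t^*\|$, so the ``inherited displacement telescopes into the initial error'' step of your summation is unsubstantiated. If you want to recover the paper's bound, the single fact you need is $\mathcal{L}_t(\theta_t,K_t)\le c\,q_t$ at the end of every round; granting that, everything else reduces to the two-line decomposition above, and your $V_T$, $S_T$ terms arise from the comparator drift $\mathcal{L}_t(z_{t+1}^*)-\mathcal{L}_t(z_t^*)$ rather than from iterate displacement.
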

\begin{proof}
    The proof is deferred to Appendix~\ref{app:proof_regret}.
\end{proof}

Assumptions (A1)–(A2) are standard in online learning and naturally satisfied in our setting. The loss $\mathcal{L}_t$ is smooth in both $\theta$ and $K$, provided that the neural network $\tilde{\Phi}_\theta$ employs smooth activations. The bounded variation of the dynamic oracle (A2) reflects slowly evolving or piecewise-stationary dynamics, which commonly arise in practice. The key nonstandard assumption is (A3), which bounds the cumulative conformity thresholds $q_t$, and which follows from our use of conformal PI. More specifically, the function $h$ comes from the saturation function $r_t$ in the update rule~\eqref{eq:conformalPI}, as designed in \citet{Angelopoulos2023ConformalPID}. While we state (A3) as an assumption, it can be viewed as an empirical hypothesis: in regimes with stable distributions and smoothly adapting models, conformity thresholds decrease rapidly, leading to sublinear accumulation. This behavior is consistently observed across our experiments (see Fig.~\ref{fig:illustration}), and deriving it remains an important direction for future work. For intuition, imagine a hypothetical scenario where (A3) does not hold, for instance where the cumulative sum grows linearly. This would imply that the scores stay roughly constant over time. From a conformal prediction standpoint, this means that predictive uncertainty does not shrink as more data arrives, indicating the model fails to learn a better representation. Such behavior may arise in non-autonomous systems under frequent abrupt changes, or when the model lacks the expressivity to learn the dynamics. While the former case is out of the scope of the present paper, the second case can be mitigated by augmenting the dimension of the lifted space.

Our proposed conformal online learning approach can be applied to more general online non-convex learning settings beyond Koopman framework. An important contribution to online non-convex learning is due to~\citet{suggala2020online}, who achieves optimal regret even under non-convex and adversarial losses by assuming access to an offline oracle. Subsequent work on dynamic regret explores variants to handle changing environments~\citep{xu2024online}. Although these approaches specify how to update model parameters at each iteration, they do not consider \emph{when} such updates should take place or \emph{to what extent}. Our approach provides a statistically principled answer to these questions.

\section{Numerical experiments}\label{sec:experiments}
We now conduct experiments on multiple datasets derived either from solving differential equations associated with canonical dynamical systems, or from real-world sequential measurements. For the sake of reproducibility, all datasets and methods can be found at \href{https://github.com/ben2022lo/COLoKe}{https://github.com/ben2022lo/COLoKe}, and we report full implementation details as well as complementary numerical studies in the supplementary material.

\begin{figure}[!t]
    \centering
    \begin{subfigure}{0.65\textwidth}
    \includegraphics[width=\linewidth]{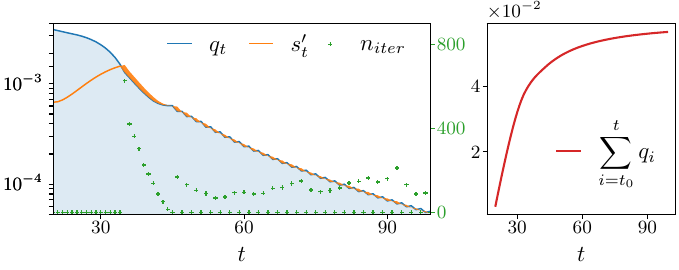}
    \caption{Left: Adaptive update behavior with threshold $q_t$, score $s_t$, and update counts; Right: sublinear growth supporting (\textbf{A3}) in Theorem~\ref{thm:colok_regret}.} 
    \label{fig:illustration}
    \end{subfigure}
    \hspace{.03\textwidth}
    \begin{subfigure}{0.29\textwidth}
    \includegraphics[width=\linewidth]{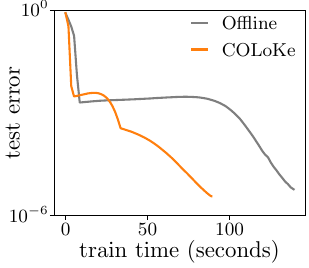}
    \caption{COLoKe vs. offline Koopman learning} \label{fig:offline}
    \end{subfigure}
    \caption{Illustration and empirical support for COLoKe's adaptive learning strategy.}
\end{figure}

\subsection{Illustration and validation}\label{sec:illustration_val}

To build intuition about our conformity-based update mechanism, we begin by illustrating how COLoKe behaves in a controlled setting. Then, we assess whether it (i) accurately recovers the underlying Koopman model, and (ii) achieves predictive performance comparable to offline learning approaches. Beyond overall accuracy, we are particularly interested in evaluating the quality of the estimated spectral properties. To this end, we consider the following analytically tractable system with known Koopman eigenvalues and eigenfunctions.

\noindent\textbf{Setting.} To validate the spectral accuracy of our online algorithm, we consider a benchmark system with analytically known Koopman eigenvalues and eigenfunctions~\citep{Brunton2016Control,Bruton2022ModernKoopman}:
\begin{equation}
\forall t\in\mathbb{R}_+,\quad \begin{cases}
\dot{u}(t) &= a u(t), \\
\dot{v}(t) &= b (v(t) - u^2(t)),
\end{cases}
\label{eq:single}
\end{equation}
where $\dot{u}$ denotes the time derivative of $u$. For $a = -0.05$ and $b = -1$, the system admits a single attracting manifold $v = u^2$. The Koopman spectrum contains the eigenvalues $\{\lambda_1^{*} = -1 , \lambda_2^{*} = -0.05, \lambda_3^{*} = -0.1\}$ with known analytical eigenfunctions. To generate data, we simulate trajectories of the state vector $x_t = [u(t), v(t)]^\top$, using a fixed integration step $0.01$. We produce 1000 training trajectories of length 100 by sampling initial conditions uniformly from $[-2, 2]^2$, and similarly generate 1000 additional trajectories for testing.

\noindent\textbf{Illustration and role of conformity-based updates.} 
Fig.~\ref{fig:illustration} (left) depicts the evolution of the calibration threshold $q_t$ (blue line) and the associated prediction score set $S_t$ (shaded blue region). When the model yields a nonconformant score $s_t > q_t$ (top orange), updates are triggered until the score becomes conformant $s_t \le q_t$ (bottom orange). The number of updates (green crosses) varies across time, reflecting when and how much the model must adjust to maintain consistency. As training progresses, the steady decay of $q_t$ reflects growing confidence and temporal alignment of the model with the data. This increased accuracy tightens the score set $S_t$, making conformity harder to achieve—yet this is a desirable outcome, as it ensures high-precision adaptation. Importantly, the number of updates remains controlled, showing that conformity can be maintained without overfitting or instability. To complement these observations, Fig.~~\ref{fig:illustration} (right) reports the cumulative thresholds, which exhibit sublinear growth of order $\mathcal{O}(\sqrt{T})$, thereby supporting Assumption \textbf{(A3)} in Theorem~\ref{thm:colok_regret}. Further analysis about how the conformal update-trigger mechanism adapts to different dynamical systems can be found in Appendix~\ref{app:update}.

\noindent\textbf{Spectral properties.} We monitor the spectral behavior of the learned Koopman operator by diagonalizing $K_t$ at each time step and tracking its eigenvalues. Although eigenvalues may be complex in general, the true values in this setting are real. Fig.~\ref{fig:eigenvalues} reports the estimation errors over time, showing that COLoKe recovers the correct spectrum, the estimated eigenvalues stabilize around their true values. At the end of training, we obtain real-valued eigenvalues $\{-1.0091, -0.04996,-0.1001\}$, which closely match the ground-truth. The estimated eigenfunctions, shown in Fig.~\ref{fig:eigenfunctions}, align with the oracle up to a scalar factor.

\begin{figure}[!t]
  \centering
  \begin{subfigure}{0.30\textwidth}
    \includegraphics[width=.9\linewidth]{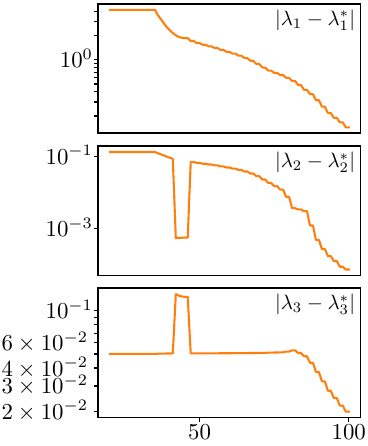}
    \caption{Eigenvalues estimation error as functions of the dynamic $t$.}\label{fig:eigenvalues}
  \end{subfigure}
  \hspace{0.5cm}
\begin{subfigure}{0.58\textwidth}
    \includegraphics[width=.9\linewidth]{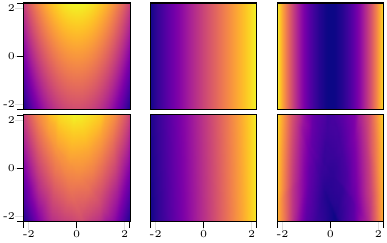}
    \caption{Eigenfunctions $\{\varphi_1,\varphi_2,\varphi_3\}$ (from left to right) corresponding to the oracle (top) and estimated by COLoKe (bottom).}\label{fig:eigenfunctions}
  \end{subfigure}
  \caption{Convergence of the Koopman eigenvalue and eigenfunction estimates in the online setting.}
  \label{fig:spectral}
\end{figure}

\noindent\textbf{Comparison with offline Koopman learning.} To assess the computational efficiency of COLoKe, we compare it with an offline deep Koopman model trained on full trajectories using the same neural architecture. Figure~\ref{fig:offline} reports the test error as a function of training time (in seconds), measured on an NVIDIA RTX 2000 ADA GPU. Note that, the test error is evaluated on a separate set of full trajectories, making it a reliable measure of generalization rather than step-wise prediction accuracy. While the offline method requires optimizing over the entire dataset, COLoKe incrementally adapts its model and reaches lower test error in significantly less time. The gap widens as training progresses, highlighting the advantage of online updates in terms of both speed and generalization. This experiment confirms that COLoKe delivers competitive predictive performance while being substantially more frugal computationally.

\subsection{Comparison with online baselines\label{sec:baselines_comparison}}

\begin{table}[t]
  \caption{Numerical evaluation of synthetic and real (indicated by $\star$) datasets. For synthetic datasets, the first and second line corresponds to generalization error and online error respectively.}
  \label{tab:benchmark}
  \centering
  \small\begin{tabular}{cccccc}
    \toprule
    \multicolumn{1}{c}{} 
    & ODMD 
    & OEDMD 
    & OnlineAE 
    & OLoKe 
    & COLoKe  \\
    \midrule
    \multirow{2}{*}{\makecell{Single\\ attractor}}       
      & \makecell{$1.1 \cdot 10^{-3}$  \\($\pm 3.6 \cdot 10^{-5}$)} & \makecell{$2.5\cdot10^{-2}$ \\($\pm 2.8\cdot10^{-4}$)} & \makecell{$1.0 \cdot 10^{-2}$ \\($\pm 7.7 \cdot 10^{-4}$)} & 
      \makecell{$2.1\cdot10^{-6}$ \\($\pm 6.6\cdot10^{-7}$)} & \makecell{$\mathbf{2.4\cdot10^{-7}}$ \\($\mathbf{\pm 3.6\cdot10^{-8}}$)}  \\
      
      & \makecell{$4.6 \cdot 10^{-5}$ \\($\pm 7.3 \cdot 10^{-7}$)} & 
      \makecell{$1.5 \cdot 10^{-2}$ \\($\pm 4.7 \cdot 10^{-4}$)}& 
      \makecell{$7.4 \cdot 10 ^{-5}$ \\($\pm 2.8\cdot 10^{-5}$)} &
      \makecell{$7.5\cdot10^{-6}$ \\ ($\pm 2.5\cdot10^{-6}$)} & 
      \makecell{$\mathbf{7.6\cdot10^{-7}}$ \\ ($\mathbf{\pm 9.6\cdot10^{-8}}$)}  \\[3ex]
      
    \multirow{2}{*}{\makecell{Duffing\\ oscillator}}     
      & \makecell{$2.5 \cdot 10^{-4}$ \\($\pm 7.8 \cdot 10^{-6}$)} &
      \makecell{$6.8\cdot10^{-3}$ \\($\pm 4.5\cdot10^{-3}$)} &
      \makecell{$8.7 \cdot 10^{-3}$ \\($ \pm 2.5 \cdot 10^{-3}$)} &
      \makecell{$5.5\cdot10^{-5}$ \\ ($\pm 1.0\cdot10^{-5}$)} &
      \makecell{$\mathbf{3.1\cdot10^{-6}}$ \\ ($\mathbf{\pm 2.3\cdot10^{-7}}$)}  \\
      
      & \makecell{$1.9 \cdot 10^{-4}$ \\($ \pm 1.5 \cdot 10^{-6}$)} &
      \makecell{$3.8\cdot10^{-3}$ \\($\pm 3.3\cdot10^{-4}$)}&
      \makecell{$2.0\cdot10^{-3}$ \\($ \pm 6.2\cdot10^{-4}$)} &
      \makecell{$2.3\cdot10^{-4}$ \\($ \pm 4.0\cdot10^{-5}$)} &
      \makecell{$\mathbf{7.3\cdot10^{-5}}$ \\($\mathbf{\pm 1.9\cdot10^{-5}}$)}  \\[3ex]
      
    \multirow{2}{*}{\makecell{VdP\\ oscillator}} 
      & \makecell{$2.1 \cdot 10^{-3}$ \\($\pm 3.6 \cdot 10^{-5}$)} &
      \makecell{$2.1\cdot10^{-3}$ \\($\pm 3.2\cdot10^{-5}$)} &
      \makecell{$1.7 \cdot 10^{-2}$ \\($ \pm 3.0 \cdot 10^{-3} $)} &
      \makecell{$6.6\cdot10^{-4}$ \\($\pm 1.5\cdot10^{-4}$)} &
      \makecell{$\mathbf{3.8\cdot10^{-4}}$ \\($\mathbf{\pm 1.2\cdot10^{-5}}$)} \\
      
      & \makecell{$1.1\cdot 10^{-3}$\\($ \pm 4.7 \cdot 10^{-6}$)} &
      \makecell{$1.1\cdot 10^{-3}$ \\($ \pm 7.8\cdot 10^{-6}$)}&
      \makecell{$3.8\cdot 10^{-3}$ \\($ \pm 1.0\cdot 10^{-3}$)} &
      \makecell{$9.2\cdot10^{-4}$ \\($ \pm 3.0\cdot10^{-4}$)} &
      \makecell{$\mathbf{6.0\cdot10^{-4}}$\\($\mathbf{ \pm 1.4\cdot10^{-4}}$)}  \\[3ex]

    \multirow{2}{*}{\makecell{Lorenz \\system}}        
      & \makecell{$2.7\cdot10^{-1}$\\($ \pm 1.3\cdot10^{-3}$)} & 
      \makecell{$5.5\cdot10^{-1}$ \\($ \pm 2.2\cdot10^{-2}$)} & 
      \makecell{$5.9\cdot10^{-1}$\\($ \pm 8.4\cdot10^{-2}$)} & 
      \makecell{$7.6\cdot10^{-3}$ \\($ \pm 1.8\cdot10^{-4}$)} & 
      \makecell{$\mathbf{6.5\cdot10^{-3}}$\\($\mathbf{ \pm 1.0\cdot10^{-4}}$)}  \\
      
      & \makecell{$1.0\cdot10^{-1}$\\($ \pm 5.8\cdot10^{-4}$)} & 
      \makecell{$2.7\cdot10^{-1}$\\($ \pm 3.1\cdot10^{-2}$)}&
      \makecell{$3.8\cdot10^{-2}$\\($ \pm 2.6\cdot10^{-3}$)} & 
      \makecell{$4.7\cdot10^{-3}$\\($ \pm 3.0\cdot10^{-4}$)} & 
      \makecell{$\mathbf{3.3\cdot10^{-3}}$\\($\mathbf{\pm 1.1\cdot10^{-4}}$)} \\[3ex]

    \makecell{ETD $\star$}            
      & \makecell{$1.2\cdot10^{-1}$\\($\pm 2.9\cdot10^{-1}$)} &
        \makecell{$1.5\cdot10^{-1}$\\($\pm 2.6\cdot10^{-1}$)} &
        \makecell{$7.9\cdot10^{-2}$\\($\pm 7.3\cdot10^{-2}$)} &
        \makecell{$9.7\cdot10^{-2}$\\($\pm 8.5\cdot10^{-2}$)} & 
        \makecell{$\mathbf{7.3\cdot10^{-2}}$\\($\mathbf{\pm 6.3\cdot10^{-2}}$)} \\[3ex]

    \makecell{EEG $\star$}            
      & \makecell{$8.3\cdot10^{-3}$\\($\pm 1.13\cdot10^{-2}$)} & 
        \makecell{$8.0\cdot10^{-3}$\\($\pm 1.11\cdot10^{-2}$)}  &
        \makecell{$1.24\cdot10^{-2}$\\($\pm 1.81\cdot10^{-2}$)} &
        \makecell{$8.8\cdot10^{-3}$\\($\pm 9.7\cdot10^{-3}$)}  &
        \makecell{$\mathbf{7.8\cdot10^{-3}}$\\($\mathbf{\pm 8.5\cdot10^{-3}}$)} \\[3ex] 
    
    \makecell{Turbulence $\star$}            
      & \makecell{$5.5\cdot10^{-4}$\\($\pm 7.5\cdot10^{-4}$)}  & 
        \makecell{$7.3\cdot10^{-4}$\\($\pm 8.9\cdot10^{-4}$)}  & 
        \makecell{$6.4\cdot10^{-4}$\\($\pm 9.0\cdot10^{-4}$)}  & 
        \makecell{$4.8\cdot10^{-4}$\\($\pm 6.3\cdot10^{-4}$)}  & 
        \makecell{$\mathbf{4.7\cdot10^{-4}}$\\($\mathbf{\pm 5.9\cdot10^{-4}}$)} \\[3ex] 
    \bottomrule
  \end{tabular}
\end{table}
We now evaluate the benefits of COLoKe against state-of-the-art online approaches on a suite of benchmark dynamical systems, commonly used in Koopman and machine learning studies, and spanning a wide range of complexity.

\noindent\textbf{Datasets.} We consider dynamical systems with a single attractor (\emph{Single Attractor})~\citep{Brunton2016Control}, two stable spirals and a saddle point (\emph{Duffing oscillator})~\citep{Williams2015EDMD}, a limit cycle (\emph{Van der Pol oscillator})~\citep{sinha2019rRecursiveEDMD}, and a chaotic regime with a strange attractor (\emph{Lorenz system})~\citep{kostic2022rkhs}. We complement them with three real-world datasets: the Electricity Transformer Dataset (\emph{ETD})~\citep{zhou2021informer}, the EEG Motor Movement/Imagery Dataset~\citep{2004EEG_1, 2000EEG_2} and a real turbulence dataset \emph{CASES-99}~\citep{CASES99}. These datasets introduce additional challenges such as noise and potential distribution shifts. Full details are provided in Appendix~\ref{app:dataset}.

\noindent\textbf{Baselines.} We compare COLoKe against the online learning strategies listed in Table~\ref{tab:competitors}. While the table includes both online and batch-based methods, our evaluation focuses only on those compatible with fully streaming, one-sample-at-a-time updates, excluding methods that require access to data batches. For R-EDMD~\citep{sinha2019rRecursiveEDMD, sinha2023online}, since the radial basis function dictionary requires the full data to estimate the centers (see Table~\ref{tab:competitors}), we use a polynomial dictionary. And the reconstruction matrix is estimated with the current buffer by regularized pseudo-inverse. This \emph{purely online} variant is coined Online EDMD (\emph{OEDMD}). For completeness, we also include a variant of COLoKe, referred to as \emph{OLoKe}, in which the conformal update rule is dropped and replaced by a standard strategy: at each time step, the parameters are updated using a fixed number of gradient steps upon receiving a new sample. Implementation details are reported in Appendix~\ref{app:models}.

\noindent\textbf{Metrics.} We report two complementary metrics to evaluate model performance. The \emph{generalization error} measures one-step prediction error on unseen trajectories. The \emph{online prediction error}, on the other hand, quantifies performance during streaming inference by averaging the training prediction loss over time as new data arrives. For real datasets, models are evaluated on a single trajectory and only online prediction error are computed. We also report the computation time of each method across all datasets in Appendix~\ref{app:speed}.

\noindent\textbf{Results.} Table~\ref{tab:benchmark} reports the performance averaged over 5 random splits of 2000 trajectories for simulated datasets, along with standard deviations of the means. Several important observations emerge. First, OEDMD underperforms ODMD, primarily due to two factors: the reconstruction matrix is estimated online, which limits accuracy; the appropriate choice of dictionary (i.e., polynomial degree) is unknown and not adaptively selected. Second, COLoKe, combining a flexible neural architecture with a principled update scheme, consistently outperforms all baselines across both synthetic and real-world datasets. In particular, it systematically outperforms its fixed-step counterpart OLoKe, demonstrating the benefit of adaptive model refinement through conformity-based updates. On the chaotic Lorenz system, (C)OLoKe achieves an improvement of nearly two orders of magnitude over the baselines, highlighting its effectiveness in capturing highly complex and sensitive nonlinear dynamics. Third, on non-autonomous real-world datasets, COLoKe achieves the best online performance, highlighting the capacity of conformal PI control to dynamically adjust to distribution shift. Altogether, these results show that COLoKe combines expressive modeling with principled online adaptation, making it a strong and reliable choice for real-time learning in complex, ever-evolving dynamical environments.

\begin{wrapfigure}{r}{0.4\textwidth}
  \vspace{-6pt}
  \centering
  \includegraphics[width=\linewidth]{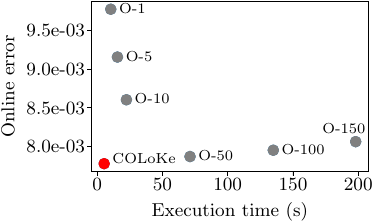}
  \caption{Pareto front comparing COLoKe to fixed-step OLoKe variants.}
  \label{fig:pareto_front}
  \vspace{-10pt}
\end{wrapfigure}
\noindent\textbf{Efficiency of adaptive updates.} To further highlight the benefits of conformal-based updates, we compared COLoKe to OLoKe trained with fixed update budgets of 1, 5, 10, 50, 100, and 150 iterations on the high-dimensional real EEG dataset. Unlike these fixed-iteration baselines, COLoKe does not require manual tuning: its updates are triggered automatically based on conformity. As shown in Figure~\ref{fig:pareto_front}, it achieves both lower online error and substantially lower execution time than all fixed-budget variants of OLoKe. This demonstrates that adaptive conformal triggering not only prevents unnecessary computations, but also outperforms any fixed-step strategy without the need to search for an optimal iteration. 

\section{Conclusion}

We have introduced COLoKe, a principled framework for online learning of Koopman-invariant representations, where conformal prediction is repurposed to adaptively trigger updates only when the model becomes temporally inconsistent. By moving beyond fixed-step updates, COLoKe achieves accurate and efficient learning of linear embeddings for nonlinear dynamics when data arrive sequentially. This work opens several promising directions for future research. On the theoretical front, our results highlight the need for a more comprehensive understanding of conformity-based online learning, with potential relevance well beyond Koopman operator estimation. The main limitation of our analysis is assumption (\textbf{A3}), whose validity remains open although being empirically supported; future work could aim to derive it from first principles for autonomous systems. On the methodological front, extending COLoKe to non-autonomous systems~\citep{gao2025kooopmanIncremental} could further broaden its applicability and help unlock practical Koopman learning in real-time, resource-constrained, and dynamically evolving environments.

\clearpage
\noindent\textbf{Acknowledgements} This work was sponsored by a public grant overseen by Auvergne-Rhône-Alpes region, Grenoble Alpes Metropole and BPIFrance, as part of project I-Démo Région "Green AI".
\bibliographystyle{plainnat}
\bibliography{ref}

@article{Lusch2018Universal,
   title={Deep learning for universal linear embeddings of nonlinear dynamics},
   volume={9},
   number={1},
   journal={Nature Communications},
   publisher={Springer Science and Business Media LLC},
   author={Lusch, Bethany and Kutz, J. Nathan and Brunton, Steven L.},
   year={2018},
   month=nov 
}

@article{Bruton2022ModernKoopman,
author = {Brunton, Steven L. and Budišić, Marko and Kaiser, Eurika and Kutz, J. Nathan},
title = {Modern Koopman Theory for Dynamical Systems},
journal = {SIAM Review},
volume = {64},
number = {2},
pages = {229-340},
year = {2022},
}

@article{kostic2022rkhs,
  title={Learning dynamical systems via Koopman operator regression in reproducing kernel Hilbert spaces},
  author={Kostic, Vladimir and Novelli, Pietro and Maurer, Andreas and Ciliberto, Carlo and Rosasco, Lorenzo and Pontil, Massimiliano},
  journal={Advances in Neural Information Processing Systems},
  volume={35},
  pages={4017--4031},
  year={2022}
}

@article{wehmeyer2018time,
  title={Time-lagged autoencoders: Deep learning of slow collective variables for molecular kinetics},
  author={Wehmeyer, Christoph and No{\'e}, Frank},
  journal={The Journal of chemical physics},
  volume={148},
  number={24},
  year={2018},
  publisher={AIP Publishing}
}

@article{vcrnjaric2020SKO,
  title={Koopman operator spectrum for random dynamical systems},
  author={{\v{C}}rnjari{\'c}-{\v{Z}}ic, Nelida and Ma{\'c}e{\v{s}}i{\'c}, Senka and Mezi{\'c}, Igor},
  journal={Journal of Nonlinear Science},
  volume={30},
  number={5},
  pages={2007--2056},
  year={2020},
  publisher={Springer}
}

@article{kostic2023sharp,
  title={Sharp spectral rates for Koopman operator learning},
  author={Kostic, Vladimir and Lounici, Karim and Novelli, Pietro and Pontil, Massimiliano},
  journal={Advances in Neural Information Processing Systems},
  volume={36},
  pages={32328--32339},
  year={2023}
}

@article{Zhang2018OnlineDMD,
author = {Zhang, Hao and Rowley, Clarence W. and Deem, Eric A. and Cattafesta, Louis N.},
title = {Online Dynamic Mode Decomposition for Time-Varying Systems},
journal = {SIAM Journal on Applied Dynamical Systems},
volume = {18},
number = {3},
pages = {1586-1609},
year = {2019},
}

@article{Brunton2016Control,
   title={Koopman Invariant Subspaces and Finite Linear Representations of Nonlinear Dynamical Systems for Control},
   volume={11},
   number={2},
   journal={PLOS ONE},
   publisher={Public Library of Science (PLoS)},
   author={Brunton, Steven L. and Brunton, Bingni W. and Proctor, Joshua L. and Kutz, J. Nathan},
   editor={Kestler, Hans A},
   year={2016},
   month=feb, pages={e0150171} }

@misc{sinha2019rRecursiveEDMD,
      title={Online Learning of Dynamical Systems: An Operator Theoretic Approach}, 
      author={Subhrajit Sinha and Sai Pushpak Nandanoori and Enoch Yeung},
      year={2019},
      eprint={1909.12520},
      archivePrefix={arXiv},
      primaryClass={eess.SY},
}

@article{sinha2023online,
  title={Online real-time learning of dynamical systems from noisy streaming data},
  author={Sinha, Subhrajit and Nandanoori, Sai Pushpak and Barajas-Solano, David A},
  journal={Scientific Reports},
  volume={13},
  number={1},
  pages={22564},
  year={2023},
  publisher={Nature Publishing Group UK London}
}

@article{Loya2024Online,
title = {Online learning of Koopman operator using streaming data from different dynamical regimes},
journal = {IFAC-PapersOnLine},
volume = {58},
number = {28},
pages = {90-95},
year = {2024},
note = {The 4th Modeling, Estimation, and Control Conference – 2024},
author = {Kartik Loya and Phanindra Tallapragada},
keywords = {Koopman operator, online learning, recursive subspace identification, Grassmannian distance}
}

@article{Hao2024DKLT,
title = {Deep Koopman learning of nonlinear time-varying systems},
journal = {Automatica},
volume = {159},
pages = {111372},
year = {2024},
author = {Wenjian Hao and Bowen Huang and Wei Pan and Di Wu and Shaoshuai Mou},
keywords = {Deep neural networks, Koopman operator, Nonlinear time-varying systems},
abstract = {This paper presents a data-driven approach to approximate the dynamics of a nonlinear time-varying system (NTVS) by a linear time-varying system (LTVS), which results from the Koopman operator and deep neural networks. Analysis of the approximation error between states of the NTVS and the resulting LTVS is presented. Simulations on a representative NTVS show that the proposed method achieves small approximation errors, even when the system changes rapidly. Furthermore, simulations in an example of quadcopters demonstrate the computational efficiency of the proposed approach.}
}

@article{Rowley2009DMD, title={Spectral analysis of nonlinear flows}, volume={641}, journal={Journal of Fluid Mechanics}, author={Rowley, Clarence W. and Mezic, Igor and Bagheri, Shervin and Schlatter, Philipp and Henningson, Dan S.}, year={2009}, pages={115–127}}

@article{Williams2015EDMD,
  title={A data--driven approximation of the koopman operator: Extending dynamic mode decomposition},
  author={Williams, Matthew O and Kevrekidis, Ioannis G and Rowley, Clarence W},
  journal={Journal of Nonlinear Science},
  volume={25},
  pages={1307--1346},
  year={2015},
  publisher={Springer}
}

@article{Otto2019Autoencoder,
author = {Otto, Samuel E. and Rowley, Clarence W.},
title = {Linearly Recurrent Autoencoder Networks for Learning Dynamics},
journal = {SIAM Journal on Applied Dynamical Systems},
volume = {18},
number = {1},
pages = {558-593},
year = {2019},
}

@article{liang2022online,
  title={Online learning Koopman operator for closed-loop electrical neurostimulation in epilepsy},
  author={Liang, Zhichao and Luo, Zixiang and Liu, Keyin and Qiu, Jingwei and Liu, Quanying},
  journal={IEEE Journal of Biomedical and Health Informatics},
  volume={27},
  number={1},
  pages={492--503},
  year={2022},
  publisher={IEEE}
}

@inproceedings{Angelopoulos2023ConformalPID,
 author = {Angelopoulos, Anastasios and Candes, Emmanuel and Tibshirani, Ryan J},
 booktitle = {Advances in Neural Information Processing Systems},
 editor = {A. Oh and T. Naumann and A. Globerson and K. Saenko and M. Hardt and S. Levine},
 pages = {23047--23074},
 publisher = {Curran Associates, Inc.},
 title = {Conformal PID Control for Time Series Prediction},
 volume = {36},
 year = {2023}
}

@article{Li2017Dictionary,
    author = {Li, Qianxiao and Dietrich, Felix and Bollt, Erik M. and Kevrekidis, Ioannis G.},
    title = {Extended dynamic mode decomposition with dictionary learning: A data-driven adaptive spectral decomposition of the Koopman operator},
    journal = {Chaos: An Interdisciplinary Journal of Nonlinear Science},
    volume = {27},
    number = {10},
    year = {2017},
}

@article{Mazouchi2023BatchOnline,
  author  = {Majid Mazouchi and Subramanya Nageshrao and Hamidreza Modares},
  title   = {Finite-time Koopman Identifier: A Unified Batch-online Learning Framework for Joint Learning of Koopman Structure and Parameters},
  journal = {Journal of Machine Learning Research},
  year    = {2023},
  volume  = {24},
  number  = {336},
  pages   = {1--35},
}

@inproceedings{yeung2019learning,
  title={Learning deep neural network representations for Koopman operators of nonlinear dynamical systems},
  author={Yeung, Enoch and Kundu, Soumya and Hodas, Nathan},
  booktitle={2019 American Control Conference (ACC)},
  pages={4832--4839},
  year={2019},
  organization={IEEE}
}

@article{takeishi2017learning,
  title={Learning Koopman invariant subspaces for dynamic mode decomposition},
  author={Takeishi, Naoya and Kawahara, Yoshinobu and Yairi, Takehisa},
  journal={Advances in neural information processing systems},
  volume={30},
  year={2017}
}

@inproceedings{xu2024online,
title={Online Non-convex Learning in Dynamic Environments},
author={Zhipan Xu and Lijun Zhang},
booktitle={The Thirty-eighth Annual Conference on Neural Information Processing Systems},
year={2024},
}

@book{vovk2005algorithmic,
  title={Algorithmic learning in a random world},
  author={Vovk, Vladimir and Gammerman, Alexander and Shafer, Glenn},
  volume={29},
  year={2005},
  publisher={Springer}
}

@article{angelopoulos2021gentle,
  title={A gentle introduction to conformal prediction and distribution-free uncertainty quantification},
  author={Angelopoulos, Anastasios N and Bates, Stephen},
  journal={arXiv preprint arXiv:2107.07511},
  year={2021}
}

@article{romano2019conformalized,
  title={Conformalized quantile regression},
  author={Romano, Yaniv and Patterson, Evan and Candes, Emmanuel},
  journal={Advances in neural information processing systems},
  volume={32},
  year={2019}
}

@inproceedings{vovk1999machine,
  title={Machine-Learning Applications of Algorithmic Randomness},
  author={Vovk, Volodya and Gammerman, Alexander and Saunders, Craig},
  booktitle={Proceedings of the Sixteenth International Conference on Machine Learning},
  pages={444--453},
  year={1999}
}

@inproceedings{zhou2021informer,
  title={Informer: Beyond efficient transformer for long sequence time-series forecasting},
  author={Zhou, Haoyi and Zhang, Shanghang and Peng, Jieqi and Zhang, Shuai and Li, Jianxin and Xiong, Hui and Zhang, Wancai},
  booktitle={Proceedings of the AAAI conference on artificial intelligence},
  volume={35},
  pages={11106--11115},
  year={2021}
}

@ARTICLE{Bruder2021,
  author={Bruder, Daniel and Fu, Xun and Gillespie, R. Brent and Remy, C. David and Vasudevan, Ram},
  journal={IEEE Transactions on Robotics}, 
  title={Data-Driven Control of Soft Robots Using Koopman Operator Theory}, 
  year={2021},
  volume={37},
  number={3},
  pages={948-961},
}

@article{klus2020kernel,
  title={Kernel-based approximation of the Koopman generator and Schr{\"o}dinger operator},
  author={Klus, Stefan and N{\"u}ske, Feliks and Hamzi, Boumediene},
  journal={Entropy},
  volume={22},
  number={7},
  pages={722},
  year={2020},
  publisher={MDPI}
}

@article{mann2016dynamic,
  title={Dynamic mode decomposition for financial trading strategies},
  author={Mann, Jordan and Kutz, J Nathan},
  journal={Quantitative Finance},
  volume={16},
  number={11},
  pages={1643--1655},
  year={2016},
  publisher={Taylor \& Francis}
}

@article{kaptanoglu2020characterizing,
  title={Characterizing magnetized plasmas with dynamic mode decomposition},
  author={Kaptanoglu, Alan A and Morgan, Kyle D and Hansen, Chris J and Brunton, Steven L},
  journal={Physics of Plasmas},
  volume={27},
  number={3},
  year={2020},
  publisher={AIP Publishing}
}

@book{mauroy2020koopman,
  title={Koopman operator in systems and control},
  author={Mauroy, Alexandre and Susuki, Y and Mezic, Igor},
  volume={484},
  year={2020},
  publisher={Springer}
}

@article{budivsic2012applied,
  title={Applied koopmanism},
  author={Budi{\v{s}}i{\'c}, Marko and Mohr, Ryan and Mezi{\'c}, Igor},
  journal={Chaos: An Interdisciplinary Journal of Nonlinear Science},
  volume={22},
  number={4},
  year={2012},
  publisher={AIP Publishing}
}

@InProceedings{pmlr-v202-hou23c,
  title = 	 {Sparse Learning of Dynamical Systems in {RKHS}: An Operator-Theoretic Approach},
  author =       {Hou, Boya and Sanjari, Sina and Dahlin, Nathan and Bose, Subhonmesh and Vaidya, Umesh},
  booktitle = 	 {Proceedings of the 40th International Conference on Machine Learning},
  pages = 	 {13325--13352},
  year = 	 {2023},
  volume = 	 {202},
  series = 	 {Proceedings of Machine Learning Research},
  month = 	 {23--29 Jul},
  publisher =    {PMLR},
}

@article{Korda2018,
title = {Linear predictors for nonlinear dynamical systems: Koopman operator meets model predictive control},
journal = {Automatica},
volume = {93},
pages = {149-160},
year = {2018},
author = {Milan Korda and Igor Mezić},
}

@inproceedings{Gibbs2021,
 author = {Gibbs, Isaac and Candes, Emmanuel},
 booktitle = {Advances in Neural Information Processing Systems},
 editor = {M. Ranzato and A. Beygelzimer and Y. Dauphin and P.S. Liang and J. Wortman Vaughan},
 pages = {1660--1672},
 publisher = {Curran Associates, Inc.},
 title = {Adaptive Conformal Inference Under Distribution Shift},
 volume = {34},
 year = {2021}
}

@inproceedings{xu2021conformal,
  title={Conformal Prediction for Online Systems},
  author={Xu, Jin and Mohan, Baranidharan and Xie, Yao},
  booktitle={Advances in Neural Information Processing Systems},
  volume={34},
  pages={28734--28746},
  year={2021}
}

@article{Jin2024,
title = {Extended Dynamic Mode Decomposition with Invertible Dictionary Learning},
journal = {Neural Networks},
volume = {173},
pages = {106177},
year = {2024},
author = {Yuhong Jin and Lei Hou and Shun Zhong},
keywords = {Deep learning, Invertible neural network, Koopman operator, Data-driven modeling},
}

@inproceedings{chretien2025sigkoop,
  title={Using Signatures and Koopman Operator to Learn Non-linear Dynamics},
  author={Chr{\'e}tien, St{\'e}phane and Gao, Ben and Patracone, Jordan and Alata, Olivier},
  booktitle={International Conference on Geometric Science of Information},
  pages={83--92},
  year={2025},
  organization={Springer}
}

@ARTICLE{2004EEG_1,
  author={Schalk, G. and McFarland, D.J. and Hinterberger, T. and Birbaumer, N. and Wolpaw, J.R.},
  journal={IEEE Transactions on Biomedical Engineering}, 
  title={BCI2000: a general-purpose brain-computer interface (BCI) system}, 
  year={2004},
  volume={51},
  number={6},
  pages={1034-1043},
  keywords={Brain computer interfaces;Signal processing;Protocols;Laboratories;Communication system control;Control systems;Signal processing algorithms;Research and development;Biomedical signal processing;Biomedical engineering}
}

@article{2000EEG_2,
author = {Ary L. Goldberger  and Luis A. N. Amaral  and Leon Glass  and Jeffrey M. Hausdorff  and Plamen Ch. Ivanov  and Roger G. Mark  and Joseph E. Mietus  and George B. Moody  and Chung-Kang Peng  and H. Eugene Stanley },
title = {PhysioBank, PhysioToolkit, and PhysioNet  },
journal = {Circulation},
volume = {101},
number = {23},
pages = {e215-e220},
year = {2000}
}

@misc{CASES99,
  author       = {Earth Observing Laboratory, UCAR},
  title        = {Cooperative Atmospheric Surface Exchange Study (CASES-99)},
  year         = {1999},
  publisher    = {UCAR/NCAR - Earth Observing Laboratory},
  note         = {Dataset}
}

@InProceedings{suggala2020online,
  title = 	 {Online Non-Convex Learning: Following the Perturbed Leader is Optimal},
  author =       {Suggala, Arun Sai and Netrapalli, Praneeth},
  booktitle = 	 {Proceedings of the 31st International Conference  on Algorithmic Learning Theory},
  pages = 	 {845--861},
  year = 	 {2020},
  editor = 	 {Kontorovich, Aryeh and Neu, Gergely},
  volume = 	 {117},
  series = 	 {Proceedings of Machine Learning Research},
  month = 	 {08 Feb--11 Feb},
  publisher =    {PMLR},
  pdf = 	 {http://proceedings.mlr.press/v117/suggala20a/suggala20a.pdf}
}

@inproceedings{bhatnagar2023improved,
  title={Improved online conformal prediction via strongly adaptive online learning},
  author={Bhatnagar, Aadyot and Wang, Huan and Xiong, Caiming and Bai, Yu},
  booktitle={International Conference on Machine Learning},
  pages={2337--2363},
  year={2023},
  organization={PMLR}
}

@inproceedings{gao2025kooopmanIncremental,
  author={Gao, Ben and Patracone, Jordan and Alata, Olivier and Chr{\'e}tien, St{\'e}phane},
  title={Apprentissage incr{\'e}mental de l'op{\'e}rateur de Koopman pour syst{\`e}mes non-autonomes via pr{\'e}diction conforme},
  booktitle = {GRETSI'25 -- XXXe Colloque Francophone de Traitement du Signal et des Images},
  year      = {2025}
}

@inproceedings{chretien2023leveraging,
  title={Leveraging the Power of Signatures for the Construction of Topological Complexes for the Analysis of Multivariate Complex Dynamics},
  author={Chr{\'e}tien, St{\'e}phane and Gao, Ben and Th{\'e}bault Guiochon, Astrid and Vaucher, R{\'e}mi},
  booktitle={International Conference on Complex Networks and Their Applications},
  pages={283--294},
  year={2023},
  organization={Springer}
}

@article{xu2025reskoopnet,
  title={ResKoopNet: Learning Koopman Representations for Complex Dynamics with Spectral Residuals},
  author={Xu, Yuanchao and Shao, Kaidi and Logothetis, Nikos and Shen, Zhongwei},
  journal={arXiv preprint arXiv:2501.00701},
  year={2025}
}

@article{xu2025sdmd,
  title={A Data-Driven Framework for Koopman Semigroup Estimation in Stochastic Dynamical Systems},
  author={Xu, Yuanchao and Shao, Kaidi and Ishikawa, Isao and Hashimoto, Yuka and Logothetis, Nikos and Shen, Zhongwei},
  journal={arXiv preprint arXiv:2501.13301},
  year={2025}
}

\clearpage

\appendix

\section{Proof of Theorem~\ref{thm:colok_regret}}\label{app:proof_regret}

For the reader’s convenience, we restate Theorem~\ref{thm:colok_regret} below.

\noindent\textbf{Theorem~\ref{thm:colok_regret}} \textit{Let $(\theta_t, K_t)$ be the parameters produced by Algorithm~\ref{alg:colok} and let $(\theta_t^*, K_t^*) \in \mathrm{argmin}_{(\theta, K)} \mathcal{L}_t(\theta, K)$ denote any time-dependent optimal model minimizing the loss at step $t$. Further assume:
\begin{itemize}[leftmargin=1cm,noitemsep]
    \item[\textbf{(A1)}] Each $\mathcal{L}_t$ is $L$-smooth with $\|\nabla \mathcal{L}_t(\theta, K)\| \le B$;
    \item[\textbf{(A2)}] The oracle path has bounded total variation and squared variation:
    $$
    V_T \coloneqq \sum_{t=1}^{T} \|(\theta_{t+1}^*, K_{t+1}^*) - (\theta_t^*, K_t^*)\| < \infty, \quad S_T := \sum_{t=1}^{T} \|(\theta_{t+1}^*, K_{t+1}^*) - (\theta_t^*, K_t^*)\|^2 < \infty;
    $$
    \item[\textbf{(A3)}] The conformity thresholds satisfy $\sum_{t=1}^T q_t \le \mathcal{O}\left(\alpha h(T)\right)$ for some sublinear, nonnegative, nondecreasing function $h$;
\end{itemize}
Then the dynamic regret satisfies: $\sum_{t=1}^T \left[\mathcal{L}_t(\theta_t, K_t) - \mathcal{L}_t(\theta_t^*, K_t^*)\right]
\le \mathcal{O}\left( \alpha h(T) + V_T + S_T\right)$.}
\begin{proof}
Since $\ell_{t-w,w}(\theta_t, K_t) \leq q_t$, we have
\[
\mathcal{L}_t(\theta_t, K_t)
= \sum_{(s, \tau) \in \mathcal{I}_t} \ell_{s,\tau}(\theta_t, K_t) \leq |\mathcal{I}_t| \cdot \ell_{t-w,w}(\theta_t, K_t) \leq \frac{w(w+1)}{2} \cdot q_t.
\]
Let $c = \frac{w(w+1)}{2}$, then we have:
\[
\mathcal{L}_t(\theta_t, K_t) - \mathcal{L}_t(\theta_t^*, K_t^*) \leq c\cdot q_t - \mathcal{L}_t(\theta_t^*, K_t^*).
\]
Let us define the error:
\[
\epsilon_t := c\cdot q_t - \mathcal{L}_t(\theta_t^*, K_t^*).
\]
We decompose $\epsilon_t$ as:
\[
\epsilon_t = \underbrace{c \cdot q_t - \mathcal{L}_t(\theta_{t+1}^*, K_{t+1}^*)}_{(a)} + \underbrace{\mathcal{L}_t(\theta_{t+1}^*, K_{t+1}^*) - \mathcal{L}_t(\theta_t^*, K_t^*)}_{(b)}.
\]

\textbf{Step a:} By assumption (A3),
\[
\sum_{t=1}^T (c \cdot q_t - \mathcal{L}_t(\theta_{t+1}^*, K_{t+1}^*)) \leq c \cdot \sum_{t=1}^T q_t \leq \mathcal{O}(\alpha h(T)).
\]

\textbf{Step b:} We now bound the drift term due to the oracle movement.

Let $z_t^* := (\theta_t^*, K_t^*)$ and $z_{t+1}^* := (\theta_{t+1}^*, K_{t+1}^*)$. Since $\mathcal{L}_t$ is $L$-smooth with $\|\nabla \mathcal{L}_t\| \leq B$, we use the standard smoothness bound:
\begin{align*}
\mathcal{L}_t(z_{t+1}^*) - \mathcal{L}_t(z_t^*)
&\leq \langle \nabla \mathcal{L}_t(z_t^*),  z_{t+1}^* - z_t^* \rangle + \frac{L}{2} \|z_{t+1}^* - z_t^*\|^2\\ 
&\leq \|\nabla \mathcal{L}_t(z_t^*)\| \cdot \|z_{t+1}^* - z_t^*\| + \frac{L}{2} \|z_{t+1}^* - z_t^*\|^2.
\end{align*}

Using $\|\nabla \mathcal{L}_t(z_t^*)\| \leq B$, define $\Delta_t := \|z_{t+1}^* - z_t^*\|$, so:
\[
\mathcal{L}_t(z_{t+1}^*) - \mathcal{L}_t(z_t^*) \leq B \Delta_t + \frac{L}{2} \Delta_t^2.
\]
Summing over $t$ from $1$ to $T$:
\[
\sum_{t=1}^{T} \mathcal{L}_t(z_{t+1}^*) - \mathcal{L}_t(z_t^*) \leq B \sum_{t=1}^{T} \Delta_t + \frac{L}{2} \sum_{t=1}^{T} \Delta_t^2 = B V_T + \frac{L}{2} S_T =  \mathcal{O}(V_T+S_T).
\]

Combining both contributions:
\[
\sum_{t=1}^T \left[ \mathcal{L}_t(\theta_t, K_t) - \mathcal{L}_t(\theta_t^*, K_t^*) \right] \leq \mathcal{O}(\alpha h(T)) + \mathcal{O}(V_T + S_T) = \mathcal{O}(\alpha h(T) + V_T + S_T).
\]

\end{proof}

\section{Experimental settings}
\subsection{Datasets and metrics}\label{app:dataset}
\subsubsection{Simulated data}
We evaluate performance on four synthetic datasets generated by solving ordinary differential equations (ODEs). For each dynamic, we simulate 2000 trajectories and construct 5 random train-test splits $\{ (\mathcal{I}_k^{\mathrm{train}}, \mathcal{I}_k^{\mathrm{test}}) \}_{k=1}^{5}$. For each split, models are trained using the training trajectories while computing the online prediction error 
\begin{equation}
    \varepsilon_k = 
    \frac{1}{|\mathcal{I}_k^{\mathrm{train}}|} \sum_{i \in \mathcal{I}_k^{\mathrm{train}}} \frac{1}{T-t_0}\sum_{t=t_0 + 1}^{T} \|x_t^{(i)} - \mathrm{Model}_{t-1}(x_{t-1}^{(i)})\|^2.
\end{equation}
After the training, models are evaluated on the held-out trajectories to compute generalization error
\begin{equation}
    \xi_k =\frac{1}{|\mathcal{I}_k^{\mathrm{test}}|} \sum_{i \in \mathcal{I}_k^{\mathrm{test}}} \frac{1}{T-1} \sum_{t = 2}^{T} \| x_t^{(i)} -\mathrm{Model}_T(x_{t-1}^{(i)})\|^2.
\end{equation}
Resuls are reported in Table~\ref{tab:benchmark} as averages accross the five splits, along with the standard deviation of the means, i.e.,
\begin{equation}
    \bar{\varepsilon} = \frac{1}{5}\sum_{k=1}^5 \varepsilon_k~,~ \hat{\sigma}_{\bar{\varepsilon}} = \frac{\sqrt{\frac{1}{5-1} \sum_{k=1}^{5}(\varepsilon_k - \bar{\varepsilon})^2}}  {\sqrt{5}},
\end{equation}
and
\begin{equation}
    \bar{\xi} = \frac{1}{5}\sum_{k=1}^5 \xi_k~,~ \hat{\sigma}_{\bar{\xi}} = \frac{\sqrt{\frac{1}{5-1} \sum_{k=1}^{5}(\xi_k - \bar{\xi})^2}}{\sqrt{5}}.
\end{equation}

Datasets are detailed below.

\paragraph{Single Attractor.}
This system, studied in depth in Section~\ref{sec:illustration_val} and also considered in Section~\ref{sec:baselines_comparison}, is a simple 2D ODE whose dynamic  converges to a known attracting manifold~\citep{Brunton2016Control,Bruton2022ModernKoopman}. Trajectories are simulated via \texttt{odeint} from SciPy with 100 time steps, equally spaced with step size $\Delta t = 0.01$, using initial conditions sampled uniformly from the domain $[ -2, 2 ]^2$. The continuous-time eigenvalues reported in Section~\ref{sec:illustration_val} are obtained from the estimated discrete-time ones via the transformation $\lambda = \log(\lambda_{\text{disc}})/\Delta t$. The ground-truth Koopman eigenfunctions for this system are given by $\varphi_1 = x_2 - \frac{\lambda_1^*}{\lambda_1^* - 2\lambda_2^*} \cdot x_1^2$, $\varphi_2 = x_1$, and $\varphi_3 = x_1^2$.
\paragraph{Duffing Oscillator.} 

This system appears in the study of nonlinear oscillations and serves as a classical benchmark for testing methods in dynamical systems. More specifically, the dynamics follow:
\begin{equation}
    \ddot{u} = -\delta \dot{u} - u(\beta +  \mu u^2), \label{eq:duffing}
\end{equation}
where the parameters \(\delta\), \(\beta\), and \(\mu\) represent the damping coefficient, the linear stiffness, and the nonlinear stiffness respectively. Here, we choose the parameters \(\delta = 0.5\), \(\beta = -1\), and \(\mu = 1\), following the setting used in~\citep{Williams2015EDMD, Otto2019Autoencoder}. This nonlinear system exhibits two stable spirals at \(u = \pm 1, \dot{u} = 0\) and a saddle at the origin. Trajectories are simulated using \texttt{odeint}, each consisting of 100 time steps with a fixed interval $\Delta t = 0.025$. Initial conditions are sampled uniformly from the domain $[ -2, 2 ]^2$.

\paragraph{Van der Pol Oscillator.}  
The Van der Pol oscillator is a classical nonlinear system that exhibits self-sustained oscillations with a stable limit cycle. Its dynamics are governed by the second-order differential equation:
\begin{align}
    \dot{u} &= v, \\
    \dot{v} &= \mu (1 - u^2) v - u,
\end{align}
where $\mu > 0$ controls the nonlinearity and damping strength. We follow the setup in~\cite{sinha2019rRecursiveEDMD} and set $\mu = 0.2$. Trajectories are simulated using \texttt{odeint}, with 100 time steps and a fixed step size of $\Delta t = 0.01$. Initial conditions are sampled uniformly from the square domain $[ -4, 4 ]^2$.

\paragraph{Lorenz System.}  
The Lorenz system is a classical example of a chaotic dynamical system, originally developed to model atmospheric convection. Its dynamics are governed by the following system of nonlinear differential equations:
\begin{align}
\dot{u} &= \sigma (v - u), \\
\dot{v} &= u (\rho - w) - v, \\
\dot{w} &= u v - \beta w,
\end{align}
where we have chosen \(\sigma = 10\), \(\rho = 28\), and \(\beta = 8/3\), a commonly studied parameter regime known to induce chaotic behavior which was also considered in~\citep{kostic2022rkhs}. We simulate the trajectories using \texttt{RK45}, with a time step \(\Delta t = 0.01\) over 500 steps. Initial conditions are sampled uniformly from the cube \([ -10, 10 ]^3\) centered at the origin.

\subsubsection{Real data}
We also benchmark on three real datasets. Since the models are now evaluated on a single trajectory, we are only interested on online prediction error. Let the online prediction error at time step $t$ be $err_t = \| x_t - \mathrm{Model}_{t-1}(x_{t-1})\|^2$. We report in Table~\ref{tab:benchmark} the temporal mean of $err_t$
\begin{equation}
    \overline{err} = \frac{1}{T-t_0} \sum_{t = t_0 + 1}^{T} err_t,
\end{equation}
and the standard deviation of $err_t$
\begin{equation}
    \hat{\sigma}_{err} = \sqrt{\frac{1}{T-t_0-1} \sum_{t = t_0 + 1}^{T} \left( err_t - \overline{err} \right)^2}.
\end{equation}

\paragraph{ETD~$\star$}  
We test on ETTh1 which is part of the Electricity Transformer Dataset (ETDataset)\footnote{\href{https://github.com/zhouhaoyi/ETDataset}{https://github.com/zhouhaoyi/ETDataset}} introduced by \citet{zhou2021informer}. ETTh1 contains data recorded at hourly intervals for roughly two years from an electricity transformer station. The dataset consists of a single trajectory of 6 features:  HUFL (High Use Frequency Load), HULL (High Use Low Load), MUFL (Medium Use Frequency Load), MULL (Medium Use Low Load), LUFL (Low Use Frequency Load), and LULL (Low Use Low Load). In our setting, we retain 200 time steps and aim to learn the transformer's load profile as a dynamical system.

\paragraph{EEG$~\star$} 
To assess the COLoKe's performance on high-dimensional data, we evaluate it on electroencephalogram (EEG) recordings with 64 channels from the PhysioNet EEG Motor Movement/Imagery dataset \footnote{\href{https://physionet.org/content/eegmmidb/1.0.0/}{https://physionet.org/content/eegmmidb/1.0.0/}}. We use the recording of Subject 1 in the \emph{eyes open} condition. The original signal consists of one minute of data sampled at 160 Hz. We downsample it to 16 Hz, resulting in a trajectory of dimension $d=64$ and of length $T=976$.

\paragraph{CASES-99~$\star$}
To further evaluate COLoKe on real-world atmospheric turbulence, we use data from the \emph{CASES-99}\footnote{\href{https://www.eol.ucar.edu/field_projects/cases-99}{https://www.eol.ucar.edu/field\_projects/cases-99}}(Cooperative Atmospheric Surface Exchange Study 1999) field experiment. The CASES-99 examined boundary-layer turbulence using high-frequency measurements from instrumented towers and remote sensing systems. In our setting, we use the recordings from the 55m tower. Specifically, we extract the three-dimensional wind components $u$, $v$, and $w$ and retain 500 time steps. This yields a trajectory of dimension $d = 3$ and length $T= 500$.

\begin{remark}
For all datasets, data prior to time \( t_0 \) are used to initialize the model parameters, including those of the Conformal PI control for COLoKe~\citep{Angelopoulos2023ConformalPID}. In the case of synthetic datasets, we set \( t_0 = T/5 \), resulting in \( t_0 = 20 \) for the single attractor, Duffing oscillator, and Van der Pol oscillator, and \( t_0 = 100 \) for the Lorenz system. For all real datasets, we use \( t_0 = 100 \).
\end{remark}

\subsection{Baseline Models}\label{app:models}
This section details the baseline models used in our experiments, including their initialization, neural network architectures (when applicable), and online training procedures.

Among the various models considered, the most commonly used in the literature are ODMD and our variant OEDMD (also known as RR-EDMD), which we briefly recall below.

\paragraph{ODMD.}

Online Dynamic Mode Decomposition~\citep{Zhang2018OnlineDMD} incrementally estimates a linear model from sequential data, enabling efficient updates without storing the entire dataset. Given a trajectory \(\{x_1, x_2, \dots x_{t_0}\}\) available to time $t_0$ for model initialization. Let \( X_{t_0} = \left[x_1, x_2, \dots, x_{t_0-1}\right] \) and \(Y_{t_0} = \left[x_2, x_3, \dots, x_{t_0}\right]\). The matrix $K_{t_0}$ is computed by
\begin{equation}
    K_{t_0} = Y_{t_0} X_{t_0}^{+} = Y_{t_0} X_{t_0}^{\top}(X_{t_0} X_{t_0}^{\top})^{-1}.
\label{eq:odmd_init1}
\end{equation}
Let
\begin{equation}
Q_{t_0} = Y_{t_0} X_{t_0}^{\top} ,~\text{and}~~P_{t_0} = \left(X_{t_0} X_{t_0}^{\top}\right)^{-1} ,
\label{eq:odmd_init2}
\end{equation}
then $K_{t_0} = Q_{t_0} P_{t_0}$. We start the online learning when a new observation $x_t$ for $t=t_0 + 1$ arrives:
\begin{equation}
    Q_{t} = Q_{t-1} + x_{t} x_{t-1}^{\top},~\text{and}~~P_t^{-1} = P_{t-1}^{-1} + x_{t-1}x_{t-1}^{\top}.
\label{eq:odmd_update1}
\end{equation}
The matrix \(K_{t} = Q_t P_t\) is obtained by inverting \(P_t^{-1}\) using Sherman Morrison formula
\begin{equation}
P_t = \left(P_{t-1}^{-1} + x_{t-1}x_{t-1}^{\top} \right)^{-1} = P_{t-1} - \frac{P_{t-1}x_{t-1}x_{t-1}^{\top}P_{t-1}}{1 + x_{t-1}^{\top}P_{t-1}x_{t-1}}
\label{eq:odmd_update2}
\end{equation}
This online formulation allows for recursive updates of $K_t$ and yields an efficient and memory-light algorithm for learning linear dynamics in real time. The readers are referred to~\citep{Zhang2018OnlineDMD} for a comprehensive explication of ODMD.

\paragraph{OEDMD.}
We first present Recursive EDMD~\citep{sinha2019rRecursiveEDMD} and point out its limitations for pure online setting, in order to introduce the variant OEDMD. Choose a fixed dictionary \(\Phi:\mathcal{X}\to \mathbb{R}^r\)
\begin{equation}
    \Phi(x) \coloneqq \left[\phi_1(x), \phi_2(x),\dots,\phi_r(x) \right]^{\top}.
\end{equation}
Let \( X_{t_0} = \left[\Phi(x_1), \Phi(x_2), \dots, \Phi(x_{t_0-1})\right] \) and \(Y_{t_0} = \left[\Phi(x_2), \Phi(x_3), \dots, \Phi(x_{t_0})\right]\), then the initialization of R-EDMD follows the equations~\eqref{eq:odmd_init1}and~\eqref{eq:odmd_init2}. The recursive updates of $K_t$ are given by equations~\eqref{eq:odmd_update1} and~\eqref{eq:odmd_update2}. To get predictions in the state space \(\mathcal{X}\), one should solve a least squares problem
\begin{equation}
    \min_C \sum_{k=1}^{t} \| x_k - C\Phi(x_k) \|^2.
    \label{eq:oedmd_reconstruction}
\end{equation}
Typically, this involves the storage of all historical data. Therefore, we propose to calculate the linear reconstruction matrix $C$ using the current buffer $\mathcal{D}_t$:
\begin{equation}
    \min_C \sum_{k=t-w}^{t} \| x_k - C\Phi(x_k) \|^2.
\end{equation}
Since the modified problem may not have a closed form solution due to the buffer size, we use regularized pseudo-inverse to efficiently update $C$. Rewrite the problem with Ridge regularization ($\rho = 10^{-6}$ for all experiments)
\begin{equation}
    \min_C \sum_{k=t-w}^{t} \| x_k - C\Phi(x_k) \|^2 + \rho \|C\|_F^2.
\end{equation}
Let $Z = [x_{t-w}, \dots, x_t]$ and $\Phi_Z = [\Phi(x_{t-w}), \dots, \Phi(x_t)]$, then we have the closed form solution
\begin{equation}
    \quad C = Z \Phi_Z^\top \left( \Phi_Z \Phi_Z^\top + \rho I \right)^{-1}.
\end{equation}

In the original work~\citep{sinha2019rRecursiveEDMD}, the authors used Radial Basis Functions (RBF) as the fixed dictionary. However, to estimate informative centers for RBF, one needs to have access to the full trajectory up to time $T$. When estimating the centers with trajectory up to time $t_0$, the model gives poor results on all datasets for both metrics. Therefore, we choose a polynomial dictionary of degree 2, which provides a lifted representation dimension comparable to other baseline models. Increasing the degree beyond 2 offers no significant performance gain. On the contrary, it degrades the performance on the real dataset and results in substantially higher computational costs. 
\vskip\baselineskip

We now describe the deep models used in our experiments. All models are trained using the \texttt{AdamW} optimizer with a learning rate of $10^{-3}$, both during parameter initialization and online training. The initialization phase consists of 4000 epochs for synthetic datasets and 5000 epochs for the real-world dataset. The neural network architectures used in each model are detailed below.
\paragraph{COLoKe.}
The detailed presentation of COLoKe can be found in Section \ref{sec:method}. The neural network $\tilde{\Phi}$ is fully connected with architecture $\{d, 32, 16, 8, m-d\}$ for synthetic datasets and $\{d, 64, 32, 16, m-d\}$ for the real dataset. The dimension of lifted representation $m$ is chosen to be $d + \left\lceil d/2 \right\rceil$ for all experiments. Model parameters and initial conformity threshold are initialized with $\{x_0,\cdots,x_{t_0}\}$ as already discussed. The initial threshold $q_{t_0+1}$ is set to be $1-\alpha$ quantile of the set of scores $\{s_{w+1},\dots,s_{t_0}\}$ computed with the initialized model. The model parameters $(\theta_t, K_t)$ are updated online according to Algorithm~\ref{alg:colok}. For all synthetic datasets, the hyperparameters for Conformal PI procedure are $\alpha = 0.5, \gamma = 0.1, C_{sat} = 5$, and we set $C_{sat} = 10$ for the real dataset. Note that the coverage guarantees~\citep{Angelopoulos2023ConformalPID} hold for any value of $\gamma > 0$. We set $\gamma = 0.1$ in our experiments as in~\citep{Angelopoulos2023ConformalPID}, although one may implement adaptive strategies as in~\citep{bhatnagar2023improved}. The choice of $\alpha$ should reflect the trade-off between computational efficiency and predictive accuracy, depending on the requirements of the application. When $\alpha \approx 1$, accuracy is prioritized over speed, whereas for $\alpha \ll 1$, computational speed is more important. In our experiments, we selected $\alpha=0.5$ as a balanced compromise between these two objectives.

\paragraph{OLoKe.} The only difference between COLoKe and OLoKe is the online training strategy. For every new buffer $\mathcal{D}_t$, OLoKe performs a fixed number of iterations.

\paragraph{OnlineAE.}
We implement the model in \citep{liang2022online}. The original work tackles the problem of Model Predictive Control (MPC). We aim only to perform online learning of dynamics. For synthetic datasets, the encoder architecture is $\{d, 32, 16, 8, m\}$ and the decoder architecture is $\{m,8,16,32,d\}$. For the real dataset, the encoder architecture is $\{d, 64, 32, 16, m\}$ and the decoder architecture is $\{m,64,32,16,d\}$. The dimension of lifted representation m is chosen to be $d +  \left\lceil d/2 \right\rceil$. Thus, the model architecture of OnlineAE aligns with COLoKe and OLoKe. The loss function at time $t$ is defined as
\begin{align*}
    \mathcal{L}_t(\mathcal{D}_t, \Phi_t,\Psi_t,K_t) = &\sum_{k=t-w}^{t} \underbrace{\| \Psi_t \left[K_t \Phi_t(x_{k-1}) \right] - x_k \|^2}_{\text{prediction loss}} + \underbrace{\| \Psi_t \circ \Phi_t(x_k) - x_k \|^2}_{\text{autoencoding loss}} \\
    & + \underbrace{\| K_t\Phi_t(x_{k-1}) - \Phi_t(x_k) \|^2}_{\text{lifted prediction loss}},
\end{align*}
where $\Phi_t$ is the encoder and $\Psi_t$ is the decoder. The online training strategy consists of fixed iterations with $N_{\mathrm{iter}} = 100$ for synthetic datasets and $N_{\mathrm{iter}} = 500$ for the real dataset, which aligns with OLoKe.

\section{Adaptive update behaviors of COLoKe}
\label{app:update}
We provide supplementary quantitative and qualitative analyses concerning the update frequency of COLoKe. Specifically, we report key summary statistics (see Table~\ref{tab:update_stats}) that characterize its update behavior and relate it to properties of each dynamical system (see Appendix \ref{app:dataset}). These statistics include:
\begin{enumerate}
    \item the number and frequency of update triggers
    \item the average interval between two consecutive triggered updates
    \item the maximum interval between two consecutive triggered updates.
\end{enumerate}

\begin{table}[!ht]
\centering
\caption{Update statistics.}
\label{tab:update_stats}
\begin{tabular}{lccccc}
\toprule
Dataset & Total steps & Triggers & Percentage & Avg. interval & Max interval \\
\midrule
Single attractor    & 80  & 37  & 0.46 & 2.16 & 8  \\
Duffing oscillator  & 80  & 38  & 0.48 & 2.08 & 6  \\
VdP oscillator      & 80  & 40  & 0.50 & 2.00 & 9  \\
Lorenz system       & 400 & 192 & 0.48 & 2.07 & 10 \\
\bottomrule
\end{tabular}
\end{table}

Overall, the proportion of steps triggering updates remains consistently close to 48\%, but the temporal structure of these updates varies appreciably across systems. This variation illustrates how the conformal triggering mechanism adapts to the underlying stability and complexity of each dynamical regime. In summary:
\begin{itemize}
    \item \textbf{Single attractor:} More iterations are needed initially, but once near the attractor, the model requires fewer corrections.
    \item \textbf{Duffing oscillator:} Updates are more uniformly distributed, consistent with moderately chaotic behavior and frequent small deviations.
    \item \textbf{VdP oscillator:} Updates tend to cluster, with longer stretches of stability that align with the smoother, low-variability regimes of this system.
    \item \textbf{Lorenz system:} Many updates occur early on due to the highly unstable transient phase, followed by sparser updates as the model adapts.
\end{itemize}

These behaviors emerge naturally from our conformal-triggering mechanism, with no dataset-specific tuning or heuristics required, highlighting its adaptability across diverse dynamical settings.

\section{Computation time comparison}
\label{app:speed}
To complement the results reported in Table~\ref{tab:benchmark}, we provide a full comparison of execution time (in seconds) across all datasets. Table~\ref{tab:runtime_comparison} shows that COLoKe consistently outperforms neural network baselines—OnlineAE and OLoKe—often by a substantial margin. As expected, the purely linear ODMD remains the fastest method overall, but it lacks the capacity to learn meaningful nonlinear embeddings. OEDMD can approximate nonlinear observables, but it relies on a predefined dictionary, which limits its expressivity and scalability. As a result, these methods underperform compared to COLoKe (cf. Table~\ref{tab:benchmark}). Despite its reduced runtime, achieved through the conformal-triggering mechanism which prevents unnecessary updates, COLoKe maintains \textit{best predictive accuracy}, highlighting its favorable cost–performance trade-off.

\begin{table}[h]
\centering
\caption{Computation time (in seconds) of each method across all datasets.}
\begin{tabular}{lccccc}
\toprule
Dataset & ODMD & OEDMD & OnlineAE & OLoKe & COLoKe \\
\midrule
Single attractor   & 1.01 & 42.25  & 27.08  & 19.25  & 11.50 \\
Duffing oscillator & 1.03 & 43.36  & 26.77  & 19.23  & 9.72  \\
VdP oscillator     & 1.04 & 44.06  & 27.00  & 19.60  & 10.94 \\
Lorenz system      & 5.42 & 470.53 & 130.16 & 266.25 & 107.96 \\
ETD~$\star$        & 0.10 & 0.52   & 31.46  & 22.25  & 13.31 \\
EEG~$\star$        & 0.03 & 0.50   & 4.94  & 16.55  & 3.67 \\
Turbulence~$\star$ & 0.01 & 0.22   & 22.83  & 16.15  & 11.43 \\

\bottomrule
\end{tabular}
\label{tab:runtime_comparison}
\end{table}

\end{document}